%% file: main.tex
\PassOptionsToPackage{dvipsnames, table, xcdraw}{xcolor}
\documentclass[]{usc}

\input{macros.tex}
\input{sections/preamble}

\title{Solve the Loop: Attractor Models for Language and Reasoning}

\author{Jacob Fein-Ashley}
\author{Paria Rashidinejad}
\vspace{-2em}
\affiliation{University of Southern California}

\date{May 12, 2026}
\website{https://attractor-models.github.io/}
\code{https://github.com/jacobfa/Attractor} 
\correspondence{\email{\{feinashl,paria.rashidinejad\}@usc.edu}}

\abstract{\input{sections/abstract}}

\begin{document}

\maketitle

\input{sections/intro.tex}

\input{sections/method.tex}
\input{sections/experiments}
\input{sections/conclusion.tex}

\bibliographystyle{assets/plainnat}
\bibliography{main}

\appendix

\input{sections/supplementary.tex}

\end{document}

%% file: macros.tex
\usepackage{ifthen}  %
\newboolean{set_me_to_remove_todos}
\setboolean{set_me_to_remove_todos}{false} %

\ifthenelse{\boolean{set_me_to_remove_todos}}{
    \newcommand{\pierre}[1]{}
    \newcommand{\hady}[1]{}
    \newcommand{\tomas}[1]{}
    \newcommand{\alex}[1]{}
    \newcommand{\todo}[1]{}
    \newcommand{\remove}[1]{}
}{
    \usepackage[normalem]{ulem}
    \newcommand{\remove}[1]{{\color{red} \sout{#1}}}
    \newcommand{\pierre}[1]{{\color{blue} [\textbf{Pierre}: #1]}}
    \newcommand{\hady}[1]{{\color{purple} [\textbf{Hady}: #1]}}
    \newcommand{\tomas}[1]{{\color{orange} [\textbf{Tomas}: #1]}}
    \newcommand{\alex}[1]{{\color{olive} [\textbf{Alex}: #1]}}
    
    \newcommand{\todo}[1]{{\color{red} [\textbf{TODO}: #1]}}
}

\newcommand{\cmark}{\ding{51}}%
\newcommand{\xmark}{\ding{55}}%

\def\1{\mathbbm{1}}

\newlength\savewidth

\usepackage{float} %

\usepackage[most]{tcolorbox}

\definecolor{TakeawayBg}{HTML}{F7F7F7}
\definecolor{TakeawayRule}{HTML}{777777}
\definecolor{TakeawayTitle}{HTML}{333333}

\newtcolorbox{takeawaysbox}{
    enhanced,
    breakable,
    colback=TakeawayBg,
    colframe=TakeawayBg,
    borderline west={1.5pt}{0pt}{TakeawayRule},
    boxrule=0pt,
    arc=0pt,
    left=8pt,
    right=7pt,
    top=7pt,
    bottom=7pt,
    before skip=8pt,
    after skip=8pt
}

\newcommand{\gainup}[1]{\textcolor{ForestGreen}{\scriptsize $\uparrow$#1\%}}
\newcommand{\gaindown}[1]{\textcolor{ForestGreen}{\scriptsize $\downarrow$#1\%}}

\usepackage{lmodern}

%% file: sections/preamble.tex
\usepackage{booktabs}
\usepackage{graphicx}
\usepackage{url}
\usepackage{amsmath, amsthm, amsfonts, amssymb, bbm}
\usepackage{dsfont}
\usepackage{multirow}
\usepackage{array}
\usepackage{pifont}
\usepackage{stfloats}
\usepackage{makecell}
\usepackage{siunitx}
\usepackage{mathtools}
\usepackage{algorithm}
\usepackage{mathrsfs}
\usepackage{tabularray}
\usepackage{sansmath}
\usepackage{algpseudocode}
\usepackage[most]{tcolorbox}
\usepackage{tabularx}
\usepackage{amsthm}
\usepackage{float}
\usepackage{placeins}
\usepackage{graphicx}
\usepackage{subcaption}
\usepackage{amsmath, amsfonts,amssymb}
\usepackage{bm}
\usepackage{caption}
\usepackage{amsthm}
\usepackage{needspace}
\tcbuselibrary{skins}
\UseTblrLibrary{booktabs}

\makeatletter
\NAT@numberstrue
\renewcommand\NAT@open{[}
\renewcommand\NAT@close{]}
\makeatother
\setcitestyle{authoryear,round}
\let\cite\citep

\definecolor{TableHeader}{HTML}{EFF5FB}
\definecolor{TableBand}{HTML}{F7FAFD}
\definecolor{TableRule}{HTML}{9DAFBE}
\definecolor{TableAccent}{HTML}{3F6380}
\definecolor{TableText}{HTML}{202830}
\definecolor{Gray}{gray}{0.95}

\newcolumntype{Y}{>{\centering\arraybackslash}X}
\newcommand{\grouprule}{\arrayrulecolor{TableRule}\specialrule{0.52pt}{0pt}{0pt}}
\newcommand{\HeaderStrut}{\rule{0pt}{3.15ex}}

\newtheorem{assumption}{Assumption}
\newtheorem{theorem}{Theorem}
\newtheorem{corollary}{Corollary}

\definecolor{BestHighlightBase}{HTML}{FFF2A8}
\colorlet{BestHighlight}{BestHighlightBase!35!white}
\newcommand{\best}[1]{\begingroup\setlength{\fboxsep}{1.2pt}\colorbox{BestHighlight}{#1}\endgroup}

\usepackage{enumitem}
\usepackage{wrapfig}

\definecolor{rliableolive}{HTML}{BBCC33}
\definecolor{rliableblue}{HTML}{77AADD}
\definecolor{rliablered}{HTML}{EE8866}

\definecolor{takeawaycolor}{HTML}{FFF1E6}
\definecolor{takeawaycolor2}{HTML}{F2F6FF}
\definecolor{takeawaycolor3}{HTML}{EAF6EF}
\definecolor{takeawaycolor4}{HTML}{E6F4F1}
\definecolor{takeawaycolor5}{HTML}{FFF9E5}
\definecolor{takeawayborder}{HTML}{FFD8C2}
\definecolor{takeawayheader}{HTML}{0B3C8C}

\tcbset{
  aibox/.style={
    width=\linewidth,
    top=9pt,
    bottom=2pt,
    left=4pt,
    right=4pt,
    colback=takeawaycolor5!100!white,
    colframe=black!75!white,
    colbacktitle=black!75!white,
    boxrule=0.75pt,
    enhanced,
    center,
    attach boxed title to top left={yshift=-0.1in,xshift=0.15in},
    boxed title style={boxrule=0pt,colframe=white,},
  }
}
\newtcolorbox{AIbox}[2][]{aibox,title=#2,#1}

\tcbset{
  greenaibox/.style={
    width=\linewidth,
    top=9pt,
    bottom=4pt,
    colback=takeawaycolor3!100!white,
    colframe=black!75!white,
    colbacktitle=black!75!white,
    boxrule=0.75pt,
    enhanced,
    center,
    attach boxed title to top left={yshift=-0.1in,xshift=0.15in},
    boxed title style={boxrule=0pt,colframe=white,},
  }
}
\newtcolorbox{greenAIbox}[2][]{greenaibox,title=#2,#1}

\newcommand{\cT}{\mathcal{T}}
\newcommand{\cC}{\mathcal{C}}
\newcommand{\cN}{\mathcal{N}}

\newcommand{\cR}{\mathcal{R}}
\newcommand{\cP}{\mathcal{P}}

\newcommand{\cA}{\mathcal{A}}

%% file: sections/abstract.tex
\vspace{-0.1em}
Looped Transformers offer a promising alternative to purely feed-forward computation by iteratively refining latent representations, improving language modeling and reasoning. Yet recurrent architectures remain unstable to train, costly to optimize and deploy, and constrained to small, fixed recurrence depths. We introduce \emph{Attractor Models}, in which a backbone module first proposes output embeddings, then an attractor module refines them by solving for the fixed point, with gradients obtained through implicit differentiation. Thus, training memory remains constant in effective depth, and iterations are chosen adaptively by convergence. Empirically, Attractor Models outperform existing models across two regimes, large-scale language-model pretraining and reasoning with tiny models. In language modeling, Attractor Models deliver a \emph{Pareto improvement} over standard Transformers and stable looped models across sizes, improving perplexity by up to 46.6\% and downstream accuracy by up to 19.7\% while reducing training cost. Notably, a 770M Attractor Model outperforms a 1.3B Transformer trained on twice as many tokens. On challenging reasoning tasks, we show that our model with only 27M parameters and approximately 1000 examples achieves 91.4\% accuracy on Sudoku-Extreme and 93.1\% on Maze-Hard, scaling favorably where frontier models like Claude and GPT o3, fail completely, and specialized recursive reasoners collapse at larger sizes. Lastly, we show that Attractor Models exhibit a novel phenomenon, which we call \emph{equilibrium internalization}: fixed-point training places the model's initial output embedding near equilibrium, allowing the solver to be removed at inference time with little degradation. Together, these results suggest that Attractor Models make iterative refinement scalable by turning recurrence into a computation the model can learn to internalize.\vspace{-0.2em}

%% file: sections/intro.tex
\begin{figure}[H]
    \centering

    \begin{subfigure}[t]{0.5\textwidth}
        \vspace{20pt}
        \centering
        \includegraphics[width=\linewidth]{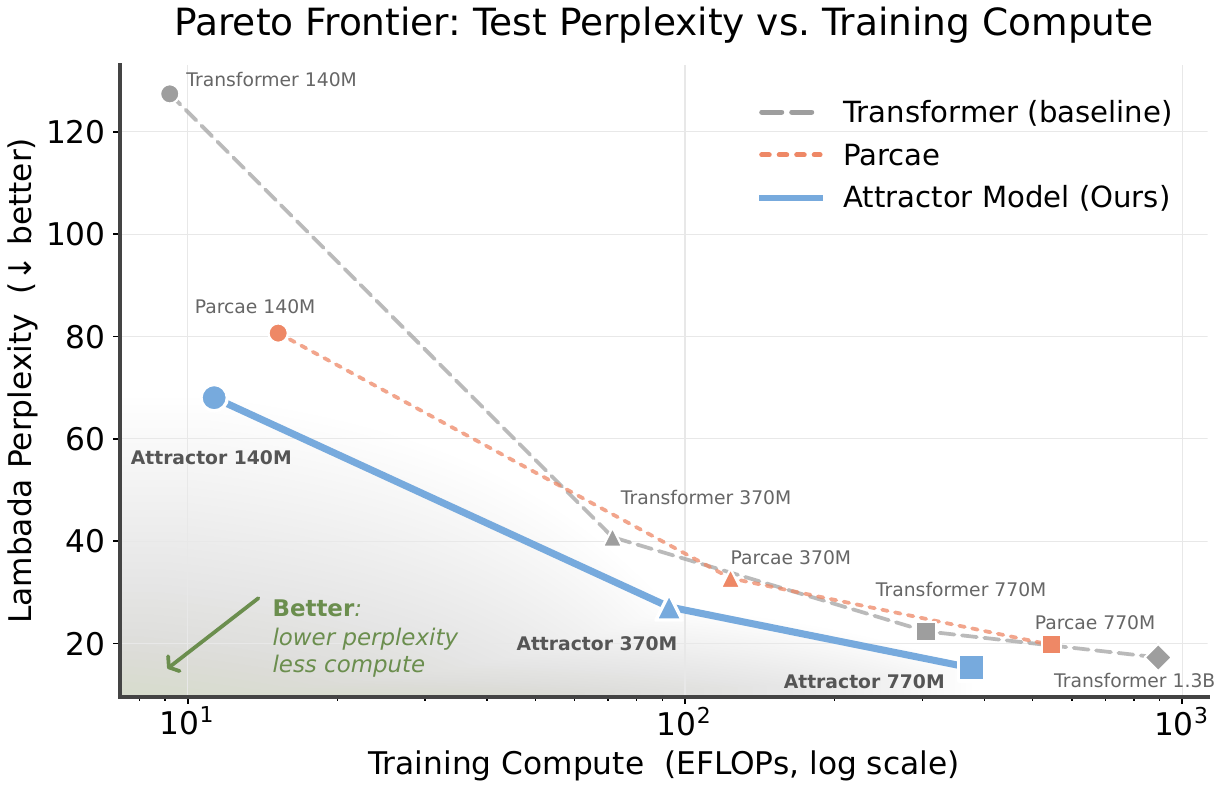}
        \label{fig:pareto_ppl_vs_flops}
    \end{subfigure}
    \hfill
    \begin{minipage}[t]{0.48\textwidth}
        \vspace{0pt}
        \centering

        \begin{subfigure}[t]{\linewidth}
            \centering
            \includegraphics[width=\linewidth]{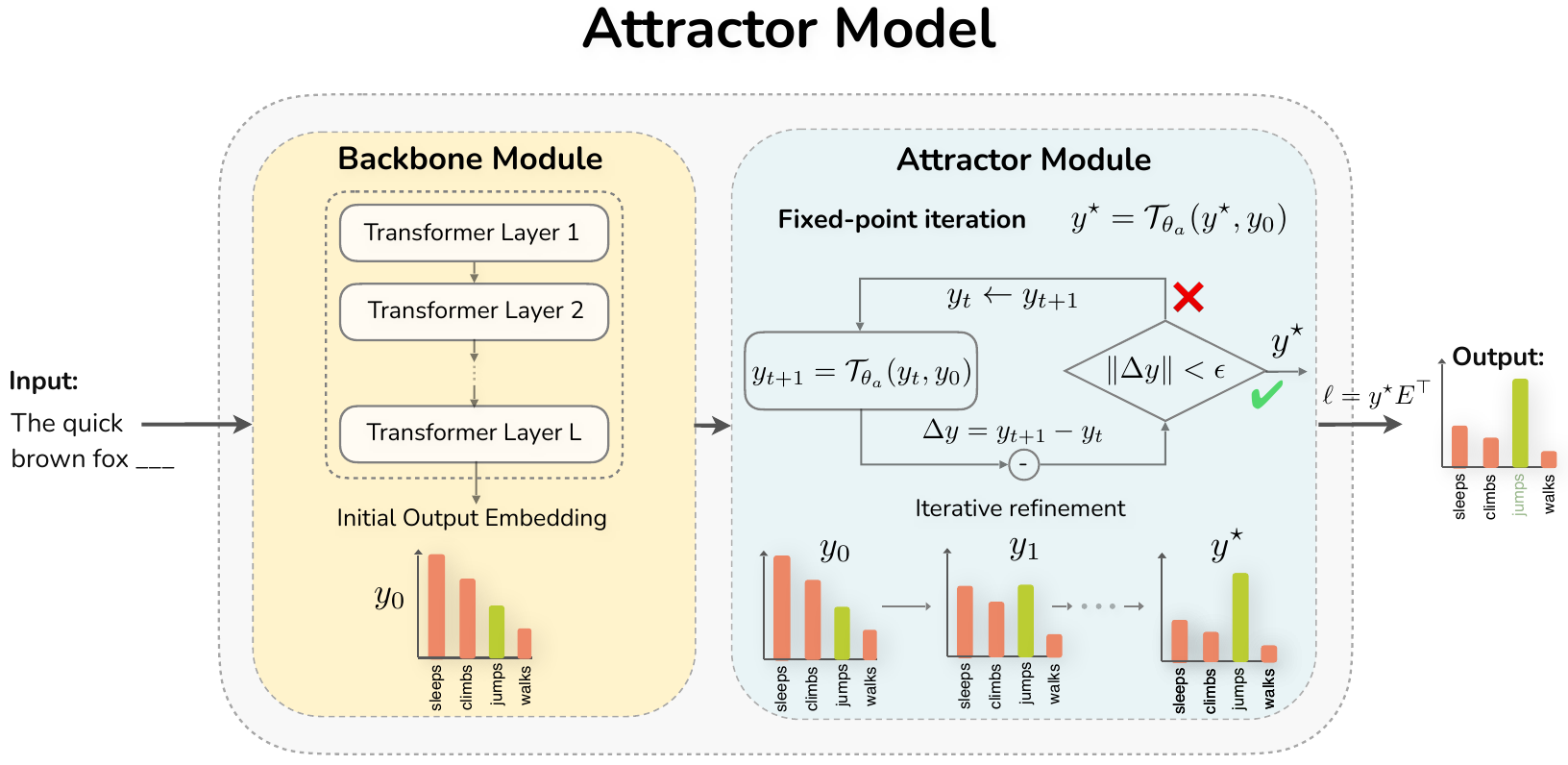}
            \label{fig:attractor_model_arch}
        \end{subfigure}
        \vspace{-0.8em}
        \begin{subfigure}[t]{\linewidth}
            \centering
            \includegraphics[width=\linewidth]{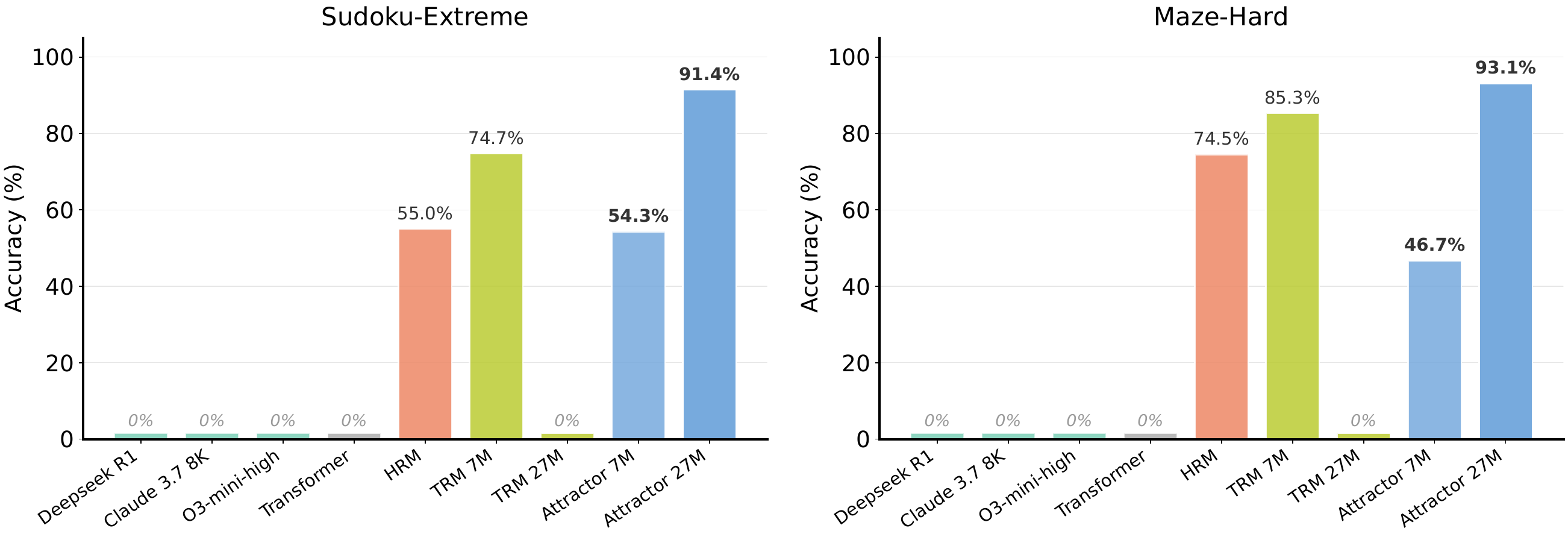}
            \label{fig:sudoku_maze}
        \end{subfigure}
    \end{minipage}
    \vspace{-1em}
    \caption{\textbf{Left: Lambada perplexity versus training FLOPs.} Attractor Models achieve strong language-modeling performance with less compute. \textbf{Top Right: Attractor Model architecture.} A non-recurrent backbone first maps the input to an initial output embedding proposal, which the attractor module refines by solving a fixed-point iteration before decoding the (approximate) equilibrium into the final output distribution. \textbf{Bottom Right: Performance on hard reasoning tasks.}  Attractor Models outperform frontier and specialized recursive models on hard reasoning tasks.}
    \label{fig:pareto_arch_sudoku_maze}
\end{figure}

\section{Introduction}
\label{sec:intro}

The modern language-modeling era has been dominated by Transformers \cite{transformer}, which produce each token through a fixed feed-forward computation. This recipe has been extraordinarily successful \cite{{achiam2023gpt, team2023gemini, grattafiori2024llama, anthropic2024claude3, r1}}, but it leaves a basic question unresolved: should each token be the product of a single pass of computation, or should a model be able to refine its latent prediction before committing to an output? A growing body of work suggests that such refinement can be powerful. Chain-of-thought reasoning \cite{cot} can be viewed as one form of such refinement, where a model writes intermediate tokens, feeds them back into its context, and uses them to shape later predictions. Yet this routes computation through the discrete token channel and forces  ``thinking'' to be written down, even when the effect might be to merely refine internal representations.

This limitation has inspired several lines of work on latent (or implicit) thinking and a re-emergence of architectural recurrence, which move thinking away from purely token-level generation. These include universal Transformers \cite{universal_transformer}, looped Transformers \cite{giannou2023looped, loop2}, recurrent-depth Transformers \cite{loop4}, looped language models \cite{looplm}, latent reasoning methods \cite{geiping, loop5}, and continuous chain-of-thought approaches \cite{coconut, mohtashami2023cotformer, zhu2026reasoning}. Looped architectures can, in principle, express iterative or algorithmic procedures \cite{loopedbetter, giannou2023looped}, emulate additional depth through weight sharing \cite{universal_transformer, looplm}, reduce the context-length costs of token-level reasoning, and improve downstream generalization \cite{loop3, loopedtransformer}. Empirically, recent looped language models offer gains in language modeling and reasoning \cite{looplm, geiping}, and tiny recursive models \cite{hrm, trm} have shown that recurrence can be useful in hard reasoning tasks in small-data regimes.

The challenge is that recurrence has proven difficult to use as a stable architectural building block. Recurrence is often accompanied by unstable training, large memory requirements that grow linearly with the number of recurrent steps, and significant, sequential compute~\cite{geiping, looplm, parcae}. Training recurrent networks typically requires backpropagation through time (or, depth) and carefully designed stabilization techniques; even then, latent-thinking models remain fragile and difficult to optimize \cite{wei2025sim, ozeren2025reinforcement, deng2026latent,deng2025latent, rizvi2026illusion}. For training, looped language models tend to require substantially more compute than comparable feed-forward models and can become memory-limited at larger recurrence depths. For instance, \citet{geiping} reports that training a recurrent model can consume raw FLOPs comparable to those of a feed-forward model ten times larger. At the opposite end of the spectrum, specialized tiny recursive reasoners exhibit a troubling ``less is more'' behavior and respond negatively to scaling: increasing model size can degrade or even collapse performance \cite{trm}.

\subsection{Contributions}
In this work, we design a general-purpose architecture for iterative refinement that is (i) stable to train, (ii) uses constant-memory in the number of refinement steps, (iii) is substantially cheaper to train than explicit unrolling, (iv) is efficient during inference, and (v) achieves a strong performance across both large-scale language modeling and hard reasoning with tiny models.

\textbf{Refine outputs by solving the loop with Attractor Models.} We introduce \textit{Attractor Models}, a new family of architectures that treat latent refinement as a fixed-point problem in the output embedding space. The model first proposes an initial guess embedding using a non-recurrent backbone module (implemented as a Transformer in ours). A separate, typically smaller, recurrent network then refines this guess (Figure~\ref{fig:pareto_arch_sudoku_maze}). Recent mechanistic analyses of looped language models demonstrate that, for the vast majority of tokens, the recurrent trajectory converges to a fixed point \cite{mech}. We build directly on this observation and instead of unrolling the loop for a predefined number of steps, we solve for the state to which the loop converges, inspired by Deep Equilibrium Models (DEQ; \cite{deq}). The name {Attractor Model} comes from dynamical systems, where an attractor is a set of states toward which a system evolves. In a sense, Attractor Models can be viewed as \emph{thinking} before producing each token: the backbone proposes an initial latent prediction, the attractor module refines it to equilibrium, then decodes it into the output distribution.

\textbf{Attractor Models offer stable, constant-memory, efficient training, and adaptive refinement.} Unlike looped LMs, which finitely unroll the recurrent block, Attractor Models solve an equilibrium by treating the prediction target as a fixed-point computation. The number of refinement steps is therefore chosen adaptively according to convergence during both training and inference. We show that the memory cost during training remains constant in the number of iterations; whereas standard looped language models have a linear scaling increase with the number of loops. Our experiments demonstrate that the two-stage structure of Attractor Models, in which the backbone proposes and the attractor refines, enables stable, efficient training and strong performance.

\textbf{Novel phenomenon: Equilibrium internalization.} We observe that despite the fact that Attractor models are trained only with the next-token prediction loss, they learn to make the solver unnecessary. During training, the backbone's initial prediction moves progressively closer to the fixed point, so fewer refinement steps are needed to reach approximate equilibrium (c.f. Figures \ref{fig:convergence_behavior} and \ref{fig:accuracy_vs_T}). We call this phenomenon \emph{equilibrium internalization}: the model appears to self-distill the iterative refinement process into its own initial output embedding, through a form of automatic curriculum. In this sense, recurrence acts as a moving training target, teaching the backbone where its computation should converge.

\textbf{Strong performance in large-scale language modeling and hard reasoning with tiny models.} Our experiments show that Attractor Models scale across two regimes. In large-scale language modeling, Attractor Models consistently outperform standard Transformers and stable looped language models across small (140M), medium (370M), and large (770M) sizes, delivering a Pareto improvement (Figure~\ref{fig:pareto_arch_sudoku_maze}). We show that our models improve validation perplexity, out-of-distribution perplexity on Lambada~\cite{lambada}, and downstream benchmark accuracy while using substantially less training compute than comparable looped baselines. Notably, a 770M-parameter Attractor Model outperforms a 1.3B-parameter Transformer trained on twice as many tokens. Compared to looped LM Parcae \cite{parcae}, our models use up to $31\%$ less training compute, while avoiding the memory growth associated with explicit unrolling. In hard reasoning tasks with tiny models, with only 27M parameters and approximately 1000 training examples, Attractor Models achieve 91.4\% accuracy on Sudoku-Extreme and 93.1\% on Maze-Hard. In this regime, standard Transformers as well as proprietary frontier models such as DeepSeek R1, Claude, and o3-mini fail completely at 0\%, while specialized recursive architectures underperform our model and collapse when scaled. Attractor Models, in contrast, improve with scale.

%% file: sections/method.tex
\section{Background: Looped Architectures}

We begin with background on looped architectures. Let \(x=(x_1,\ldots,x_n)\in\mathcal V^n\) be an input sequence over vocabulary \(\mathcal V\), and let \(d\) denote the model width. Looped models can be written as a composition of three units: a {prelude} unit $\tilde x = \cP(x) \in\mathbb R^{n\times d}$, which produces an input representation $\tilde x \in \mathbb{R}^{n \times d}$; a weight-tied {recurrent} unit $h_{t+1} = \cR(h_t, \tilde x)$, which is applied repeatedly to a latent state \(h_t\in\mathbb R^{n\times d}\) for $T$ steps; and a {coda} unit, which maps the final latent state to output probabilities \(p = \cC(h_T) \in\Delta(\mathcal V)^n\). Importantly, looped architectures commonly initialize the latent state at an \emph{uninformative value}, such as $h_0=0$ or Gaussian noise $h_0 \sim \cN(0, \sigma^2 I)$~\cite{geiping, parcae, deq}. Furthermore, the recurrent step may use the input representation only at the first step~\cite{looplm}. or at every recurrent step~\cite{geiping, parcae}. Such injection may be through addition or concatenation with the recurrent state.

Models such as Parcae \cite{parcae}, Huggin \cite{geiping}, and Ouro \cite{looplm} differ mainly in how they train, stop, or scale this looped architecture. In particular, the recurrence depth $T$ is a central design choice in these models. It may be fixed \cite{trm}, sampled during training \cite{geiping, mcleish2025teachingpretrainedlanguagemodels, parcae}, or determined by an auxiliary halting mechanism \cite{mor, looplm}. Training then minimizes an objective averaged over both the data distribution and the chosen recurrence-depth mechanism, typically by backpropagation through depth. Consequently, both training cost and gradient memory are tied to the number of recurrent steps. Furthermore, changing $T$ at inference introduces a train--test mismatch, since the model is evaluated under a different computation graph than the one used during training, leading to degraded performance.

\section{Solve the Loop with Attractor Models}
\label{sec:method}

As discussed in the previous section, standard looped language models~\cite{looplm, parcae} use weight sharing to recurrently refine a hidden state that is initialized from an uninformative value and based on input embeddings. Predictions are then read out after a finite number of loops~\cite{parcae}, or once an auxiliary halting head becomes confident~\cite{act, looplm, parcae}. This design carries three drawbacks: the loop count must be chosen at training time, training memory grows linearly in the number of loops, and accuracy degrades when more loops are run at inference than were seen during training~\cite{looplm}. As a result, recurrence often comes with unstable training, growing memory requirements, and large sequential compute, in some cases approaching the costs of training non-recurrent models ten times larger~\cite{geiping}.

Recent mechanistic analyses of looped language models~\cite{mech} reveals that for the vast majority of tokens, the recurrent trajectory eventually converges to a fixed point. This suggests that the weight-tied recurrent modules are often approximating an underlying fixed-point computation, doing so through the recursive application of the weight-tied block truncated after $T$ steps. This observation motivates the design of Attractor Models, which we subsequently describe.

\begin{figure}[t]
  \centering
  \begin{subfigure}[t]{0.47\textwidth}
    \centering
    \includegraphics[
      width=\linewidth,
      trim={3.22cm 3.2cm 4.5cm 1.24cm},
      clip
    ]{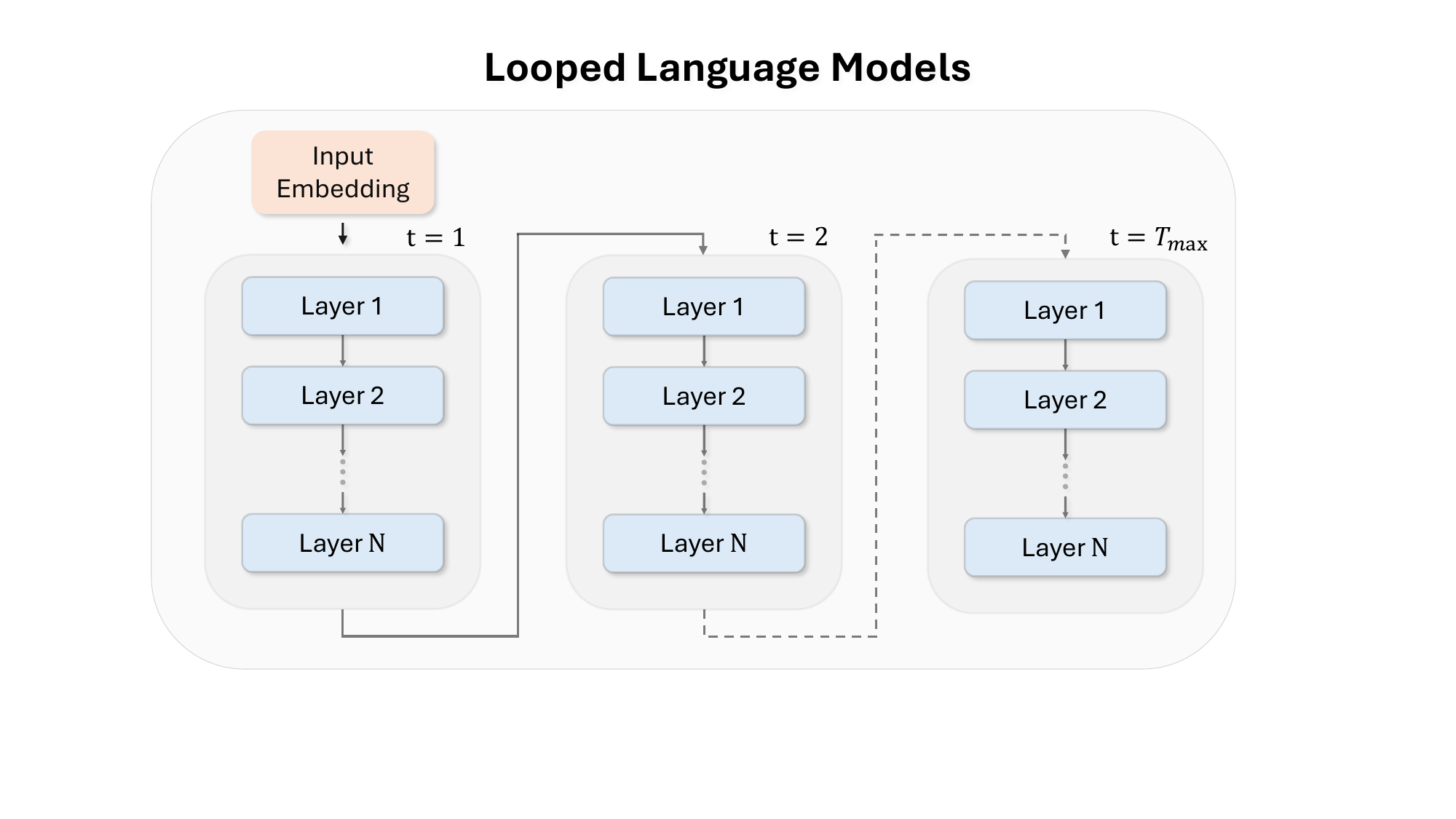}
    \label{fig:fig2}
  \end{subfigure}
  \hfill
  \begin{subfigure}[t]{0.50\textwidth}
    \centering
    \includegraphics[
      width=\linewidth,
      trim={4.2cm 2.8cm 1.8cm 1.45cm},
      clip
    ]{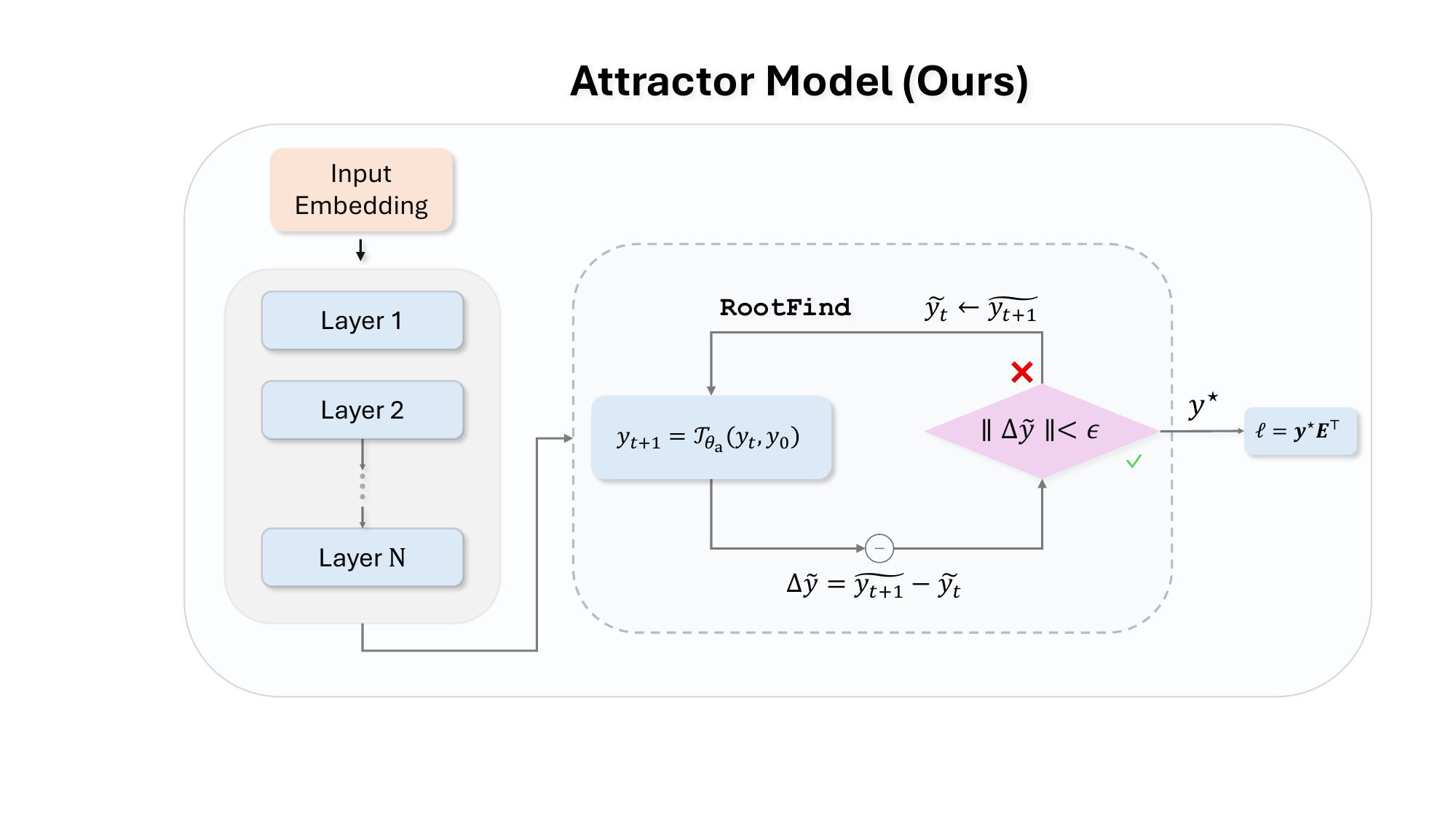}
    \label{fig:fig3}
  \end{subfigure}
  \caption{\textbf{Comparison of looped language models vs. Attractor Models.} Looped language models repeatedly apply a shared block for a finite number of steps before decoding the final state. In contrast, Attractor Models use a backbone to produce an initial output embedding, then refine it with an attractor module until the fixed-point residual is small, and decode the resulting approximate equilibrium.}
  \label{fig:side-by-side}
\end{figure}

\subsection{Attractor Model: Backbone and Attractor Modules}

Motivated by the fixed-point behavior observed in looped models, we model recurrent refinement as an attractor.  Rather than training a model to produce good predictions after a prescribed number of recurrent steps, we define the output as the equilibrium of the refinement process. Attractor Models consist of two modules: the \textit{backbone module} (typically a larger Transformer network) first proposes a meaningful initial output embedding, and the \textit{attractor module} (typically a smaller Transformer-based network) then refines this proposal until convergence. This makes the number of refinement steps a solver choice rather than a fixed architectural choice.

We first start by mapping the inputs $x$ into input embeddings $\tilde x = E(x) \in \mathbb{R}^{n \times d}$, where $E$ denotes the tied embedding/unembedding. Then, the input embedding is processed by the backbone and attractor modules as described below.

\textbf{The backbone module proposes an initial ``guess '' output embedding.} The backbone module $\mathcal T_{\theta_b}$ maps the input embeddings to an initial proposal:
\begin{align}
    \tilde y_0 = \mathcal T_{\theta_b}(\tilde x), \;\; \text{where} \;\; \tilde x = E(x).
    \label{eq:backbone_module}
\end{align}
We use $\tilde y_0$ as an initialization for the attractor module. Instead of initializing the loop from zero, noise, or an input-side representation, Attractor Models initialize the recurrent computation from a state that is already a coherent prediction embedding. In practice, $\mathcal T_{\theta_b}$ is a relatively high-capacity causal Transformer, so the refinement begins near a meaningful initialization rather than 0. We find that this makes training our method stable compared to DEQ, which experiences a blow-up in the number of iterations used later in training; whereas our method stabilizes later in training (c.f.~ Figure~\ref{fig:convergence_behavior}(b)).

\begin{figure}
    \centering
    \includegraphics[
        width=\linewidth,
        trim=0.36cm 5.75cm 1.83cm 5.37cm,
        clip
    ]{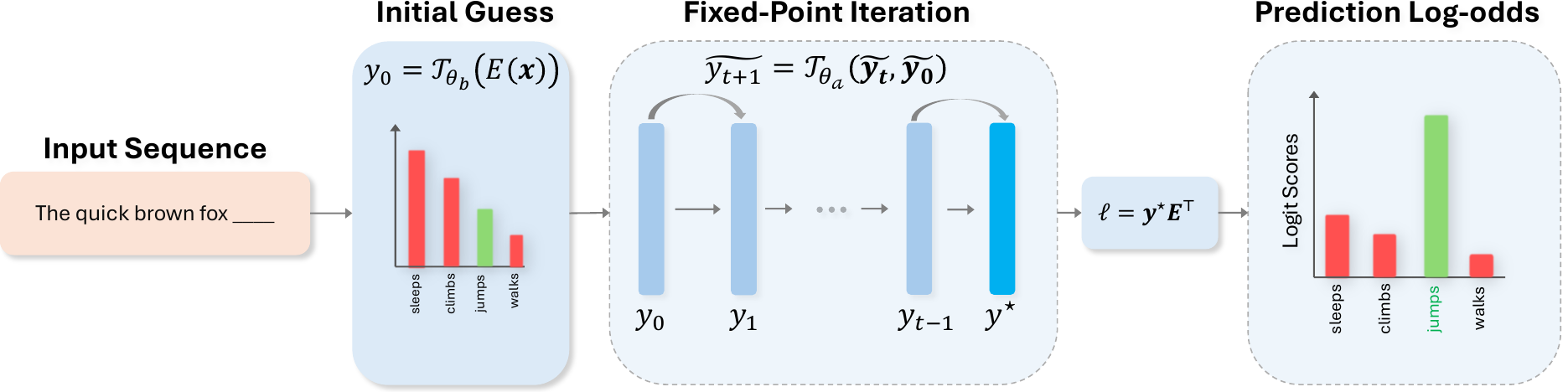}
    \caption{\textbf{Overview of Attractor Models.} The backbone maps the input embeddings to an output-side proposal $\tilde y_0=\cT_{\theta_b}(E(x))$. The attractor module then refines this proposal through the proposal conditioned iteration $\tilde y_{t+1}=\cT_{\theta_a}(\tilde y_t,\tilde y_0)$ until convergence to an approximate equilibrium  $\tilde y^\star$ or a maximum number of solver steps. The equilibrium is decoded with the tied unembedding $\tilde y^\star E^\top$.}
    \label{fig:overview}
\end{figure}

\textbf{The attractor module refines the output embedding.}
The attractor module is a separate weight-tied refinement network $\mathcal T_{\theta_a}$. Starting from the backbone proposal $\tilde y_0$, it repeatedly refines the output embedding according to
\begin{align}\label{eq:refinement_def}
    \tilde y_{t+1} = \cT_{\theta_a}(\tilde y_t, \tilde y_0), \;\; \text{where} \;\; \tilde{y}_0 = \cT_{\theta_b}(\tilde x).
\end{align}
Here, we persistently inject the initial guess $\tilde y_0$ at every refinement step. This persistent injection keeps the attractor proposal-dependent and prevents it from collapsing to a proposal-independent fixed point. We ablate this conditioning mechanism in Section~\ref{sec:experiments}. Importantly, we \textit{warm-start} the attractor module at an informative proposal $\tilde y_0$, in contrast to existing work that initialize the recurrent state at uninformative values such as zero or Gaussian noise \cite{geiping, parcae}; see Table \ref{tab:ablation_init} for a comparison.

Rather than rolling out recurrent steps to reach a fixed point, we directly solve for the convergence:
\begin{align}\label{eq:attractor_module}
    \cA_{\theta_a}(\tilde y^\star, \tilde y_0) \coloneqq \cT_{\theta_a}(\tilde y^\star,\tilde y_0) - \tilde  y^\star = 0 
    \; \Rightarrow \; \tilde y^{\star} \;=\;
    \texttt{RootFind}\!\left(\cA_{\theta_a}(\cdot, \tilde y_0); y_0\right).
\end{align}
In the forward pass, we compute this equilibrium with a root finder initialized at the backbone proposal. In our implementation, the \texttt{RootFind} algorithm uses Anderson acceleration, which combines a small window of past iterates and residuals to reach the fixed point faster than plain recursion. The solver exits when ${\|\cA_{\theta_a}(\tilde y_t,\tilde y_0)\|_2}/{\|\tilde y_t\|_2}<\varepsilon$ or after $T_{\max}$ steps. Thus, the computation is controlled by the convergence of the residual rather than by a learned halting head or a preset loop count. In contrast to fixed unrolling, the number of refinement steps can therefore vary at inference time without changing the model. Finally, the equilibrium embedding is decoded with the tied unembedding. 

Parameters of the Attractor Models consist of the tied embedding/unembedding matrices, the backbone module, and parameters of the attractor module: $\theta \coloneqq (\theta_a, \theta_b, E)$. Compared to looped models, Attractor Models change both the starting point and the endpoint of recurrence: we initialize the loop from the output guess from the backbone network $\tilde y_0$, and the decoded state is the attractor $\tilde y^\star$ rather than a finite unroll.

\subsection{Training and Inference of Attractor Models}

We now describe the training procedure for Attractor Models. We first explain how to differentiate through the fixed-point solver using implicit differentiation, and then show how the model is optimized with the standard cross-entropy language-modeling loss applied to the output $y^{\star}$.

\textbf{Backward pass and implicit differentiation.} Because Attractor Models define the output embedding as the solution to a fixed-point equation, we differentiate through the equilibrium using the implicit function theorem \cite{krantz2002implicit}. Let $\mathcal L$ denote the training loss and let $v=\partial \mathcal L/\partial \tilde y^\star$. Applying the implicit function theorem to $\cA_{\theta_a}(\tilde y^\star,\tilde y_0)=0$ gives
\begin{align}\label{eq:implicit_gradient}
    \frac{\partial \mathcal L}{\partial \theta}
    =
    u^\top
    \frac{\partial \cT_{\theta_a}(\tilde y^\star,\tilde y_0)}{\partial \theta},
    \quad
    u
    =
    \left(I-J_{\tilde y}^{\top}\right)^{-1}v, \quad \text{where} \quad J_{\tilde y}
    =
    \left.
    \frac{\partial \cT_{\theta_a}(\tilde y,\tilde y_0)}
    {\partial \tilde y}
    \right|_{\tilde y=\tilde y^\star}.
\end{align}
The derivative with respect to $\theta=(\theta_a,\theta_b,E)$ includes the direct dependence on the attractor parameters $\theta_a$, as well as the dependence on the initialization $\tilde y_0=\cT_{\theta_b}(E(x))$ through the backbone parameters $\theta_b$ and the tied embedding parameters $E$.

Following prior work on implicit models~\cite{phantomgradient, jfb}, we use the one-step approximation $u\approx v$. This avoids the extra linear solve for $u$ and reduces the backward pass to one vector--Jacobian product through $\cA_{\theta_a}$. Since we do not backpropagate through every solver step, memory in the attractor block does not grow with the number of forward iterations. In Section~\ref{sec:experiments}, we show that the Anderson solver yields only marginal quality gains, whereas the one-step approximation enables substantially cheaper training.

\textit{Remark.} Attractor Models are implicit equilibrium models in the spirit of DEQs~\cite{deq}, but the equilibrium plays a different role. Classical DEQs replace the prediction network with a hidden-state equilibrium $z^{\star}$ (single-layer), decoded with a separate output head. We instead keep a standard causal Transformer backbone and add an equilibrium refinement block on top of its prediction state. The fixed point lives directly in the tied embedding space, so every iterate $\{y_0,y_1,\ldots,y^{\star}\}$ is already a representation in the output space that can be decoded. This gives three practical differences: (i) the solver is initialized from a semantically meaningful proposal $\tilde y_0$ rather than from an uninformative state such as zero (as in DEQ), (ii) inference can stop according to a residual tolerance $\varepsilon$ rather than a fixed depth or learned halting head, and (iii) DEQ shows that scaling the number of DEQ blocks can harm the performance of their method, whereas we allow for an arbitrary depth backbone transformer and show that we can use a variable number of solver blocks.

\begin{algorithm}[!h]
\caption{Training Attractor Models.}
\label{alg:method}
\begin{algorithmic}[1]
\Require input tokens $x$, parameters $\theta=(\theta_a,\theta_b,E)$, tolerance $\varepsilon$, max iterations $T_{\max}$
\Statex \textbf{// Forward pass}
\State $\tilde x \gets E(x)$ \Comment{tied token embedding}
\State $\tilde y_0 \gets \cT_{\theta_b}(\tilde x)$ \Comment{backbone proposal / initialization}
\State Define $\cA_{\theta_a}(\tilde y;\tilde y_0) \gets \cT_{\theta_a}(\tilde y,\tilde y_0)-\tilde y$
\State $\tilde y^\star \gets \texttt{RootFind}\!\left(\cA_{\theta_a}(\cdot;\tilde y_0);\; \tilde y_0,\; \varepsilon,\; T_{\max}\right)$
\State $p \gets \mathsf{Softmax}(\tilde y^\star E^\top)$ \Comment{tied unembedding and softmax}

\Statex \textbf{// Backward pass, given } $v=\partial \mathcal L/\partial \tilde y^\star$
\State solve $\left(I - J_{\tilde y}^{\top}\right) u = v$ via $\texttt{RootFind}\!\left(u' \mapsto (I - J_{\tilde y}^{\top})\,u' - v;\; u_0=v\right)$
\State $\partial \mathcal L/\partial \theta \gets
u^\top
{\partial \cT_{\theta_a}(\tilde y^\star,\tilde y_0)}/{\partial \theta}$
\end{algorithmic}
\end{algorithm}

\textbf{Training objective and inference.}  We train Attractor Models with the standard next-token prediction cross-entropy objective applied to the fixed-point output $y^{\star}$. Inference reuses the same equilibrium computation. Given an input sequence $x$, the backbone first produces the proposal $\tilde y_0=\cT_{\theta_b}(E(x))$, the attractor solver computes $\tilde y^\star$, and the tied unembedding decodes $\tilde y^\star$ into next-token probabilities. Peak memory is bounded by a single forward through the attractor module and standard KV-caching applies in the backbone. In principle, the solver tolerance $\varepsilon$ and maximum iteration budget $T_{\max}$ are inference-time hyperparameters: they can be adjusted without retraining the model, turning test-time computation into a budget for approaching the learned attractor. Interestingly, however, we find that trained Attractor Models often require very little test-time refinement, as we describe in the next section.

\subsection{Equilibrium Internalization and Stability in Attractor Models}

Although Attractor Models define predictions through the equilibrium $\tilde y^\star$, we observe a surprising phenomenon: after training, the backbone proposal $\tilde y_0$ often lies close to the equilibrium (c.f.~Figures \ref{fig:convergence_behavior} and \ref{fig:accuracy_vs_T}). We refer to this phenomenon as \emph{equilibrium internalization}. Intuitively, the attractor module appears to act as a moving teacher for the backbone, resulting in a form of automatic curriculum. Early in training, the proposal $\tilde y_0$ may be far from a good prediction, and the solver must perform nontrivial refinement to reach $\tilde y^\star$. Since $\tilde y_0$ and $\tilde y^\star$ live in the same tied output-embedding space, gradients through the equilibrium also train the backbone proposal to move toward the state that the solver would have found. Thus, when the backbone is sufficiently expressive, much of the prediction work can be internalized into $\tilde y_0$, leaving the attractor to perform a stable refinement.

\textbf{Stability.}
Equilibrium training also biases the recurrent map toward convergent dynamics. The implicit gradient contains the inverse factor $(I-J_{\tilde y}^{\top})^{-1}$, which becomes ill-conditioned near non-contractive regimes. This creates a barrier against unstable fixed-point dynamics, unlike fixed-loop training, which can learn trajectories that are accurate only at a prescribed step count and fail under extra inference-time loops. We discuss this contrast in detail in Appendix~\ref{subsec:fall}. We refer to Appendix~\ref{sec:theory} for theoretical analysis of Attractor Models and comparison with finite-loop models.

%% file: sections/experiments.tex
\section{Experiments}
\label{sec:experiments}

We evaluate Attractor Models in two regimes. We first study language modeling across model sizes, comparing against parameter-matched Transformers and looped-LM baselines. We evaluate scaling behavior, downstream accuracy, and training efficiency. We also present results on hard reasoning tasks, where we test whether the same fixed-point refinement mechanism improves small models on problems that require iterative computation.

\subsection{Attractor Models Improve Large-Scale Language Modeling}\label{sec:experiments_lm}

\textbf{Setup.} We follow the \texttt{nanochat}~\cite{nanochat} pretraining recipe used by Parcae~\cite{parcae} for its main Transformer comparison, training on FineWeb-Edu~\cite{fineweb}. To ensure a fair comparison, all models are matched in parameter count and trained with the same data budget, optimizer, and learning-rate schedule as the Parcae baselines; the only change is the recurrent block. We compare at three scales: 140M, 370M, and 770M parameters. Parcae is a middle-looped language model with prelude, coda, and recurrent blocks. The stability in this model comes from a linear injection that bounds the spectral radius of the recurrence below one. In contrast, our architecture uses standard Transformer blocks followed by a fixed-point iteration block, and lets the solver itself control the effective depth. Detailed hyperparameter settings are in Appendix \ref{sec:hyperparam}.

\textbf{Parameter Scaling.} We demonstrate how large-scale pretraining scales with our model. We evaluate our method on 140M, 370M, and 770M parameters against a parameter-matched Transformer and {Parcae}, a looped-language model~\cite{parcae}. Our model improves monotonically with scale. Across all three sizes, our method achieves the best validation PPL, Lambada PPL, and CORE accuracy. These results show that our fixed-point refinement scales cleanly with model size, with especially large gains on Lambada, where iterative refinement substantially improves long-context prediction.

\begin{table}[!hb]
\centering
\caption{\textbf{Parameter scaling results on large-scale language modeling.} Our Attractor Model outperforms standard Transformers and looped model \textsc{Parcae} on nearly every benchmark. Our 770M model performs comparably to a standard Transformer with nearly twice as many parameters and trained on twice as many tokens. Arrows indicate the relative improvement over the parameter-matched standard Transformer baseline. Arrows indicate the relative improvement over the parameter-matched Transformer baseline.}
\label{tab:parcae_results}
\small
\setlength{\tabcolsep}{0pt}
\renewcommand{\arraystretch}{1.04}
\color{TableText}
\begin{tabular*}{\linewidth}{
    @{}
    p{0.07\linewidth}
    @{\hspace{0.25em}}
    l
    @{\extracolsep{\fill}}
    @{\hspace{0.25em}}
    c
    @{\hspace{0.25em}}
    c
    @{\hspace{0.25em}}
    c
    @{\hspace{0.25em}}
    c
    @{\hspace*{1.8em}}
    @{}
}
\arrayrulecolor{TableRule}
\textbf{Size} &
\textbf{Model} &
\textbf{Val. PPL} $\downarrow$ &
\textbf{Lambada PPL} $\downarrow$ &
\textbf{Core} $\uparrow$ &
\textbf{Core-Ext.} $\uparrow$ \\
\arrayrulecolor{TableRule}\specialrule{1pt}{1pt}{1pt}

\textcolor{TableAccent}{\textbf{140M}}
& Transformer & 21.48 & 127.39 & $13.00 \pm 0.15$ & $8.80 \pm 0.21$ \\
& \textsc{Parcae} & 19.06 & 80.64 & $14.04 \pm 0.20$ & $9.67 \pm 0.28$ \\
& \textcolor{TableAccent}{\textbf{Attractor Model (ours)}}
& \best{\textbf{18.30}}\makebox[0pt][l]{\hspace{0.12em}{\scriptsize\gaindown{14.8}}}
& \best{\textbf{68.02}}\makebox[0pt][l]{\hspace{0.12em}{\scriptsize\gaindown{46.6}}}
& \best{$\bm{14.59 \pm 0.11}$}\makebox[4pt][l]{\hspace{0.12em}{\scriptsize\gainup{12.2}}}
& \best{$\bm{10.03 \pm 0.05}$}\makebox[0pt][l]{\hspace{0.12em}{\scriptsize\gainup{14.0}}} \\

\grouprule

\textcolor{TableAccent}{\textbf{370M}}
& Transformer & 15.79 & 40.77 & $17.46 \pm 0.03$ & $11.71 \pm 0.22$ \\
& \textsc{Parcae}
& 14.49
& 32.74
& $20.00 \pm 0.06$
& \best{$\bm{12.75 \pm 0.31}$} \\
& \textcolor{TableAccent}{\textbf{Attractor Model (ours)}}
& \best{\textbf{14.03}}\makebox[0pt][l]{\hspace{0.12em}{\scriptsize\gaindown{11.1}}}
& \best{\textbf{27.14}}\makebox[0pt][l]{\hspace{0.12em}{\scriptsize\gaindown{33.4}}}
& \best{$\bm{20.24 \pm 0.09}$}\makebox[0pt][l]{\hspace{0.12em}{\scriptsize\gainup{15.9}}}
& $12.64 \pm 0.33$\makebox[0pt][l]{\hspace{0.12em}{\scriptsize\gainup{7.9}}} \\

\grouprule

\textcolor{TableAccent}{\textbf{770M}}
& Transformer & 13.08 & 22.37 & $22.42 \pm 0.20$ & $14.20 \pm 0.63$ \\
& \textsc{Parcae} & 12.49 & 19.71 & $25.07 \pm 0.33$ & $15.19 \pm 0.43$ \\
& \textcolor{TableAccent}{\textbf{Attractor Model (ours)}}
& \best{\textbf{12.09}}\makebox[0pt][l]{\hspace{0.12em}{\scriptsize\gaindown{7.6}}}
& \best{\textbf{15.21}}\makebox[0pt][l]{\hspace{0.12em}{\scriptsize\gaindown{32.0}}}
& \best{$\bm{26.83 \pm 0.29}$}\makebox[0pt][l]{\hspace{0.12em}{\scriptsize\gainup{19.7}}}
& \best{$\bm{15.42 \pm 0.51}$}\makebox[0pt][l]{\hspace{0.12em}{\scriptsize\gainup{8.6}}} \\

\grouprule

\textcolor{TableAccent}{\textbf{1.3B}}
& Transformer & 11.95 & 17.26 & $25.45 \pm 0.08$ & $15.90 \pm 0.23$ \\

\end{tabular*}
\end{table}

\textbf{Training efficiency: Lower FLOPs.}
 The one-step IFT backward pass keeps training memory constant in the number of solver iterations. The memory of standard looped language models like~\cite{parcae} scales linearly with the number of loops. The total training FLOPs (Figure~\ref{fig:train_flops}) follow the same trend: although every step of our recurrent block costs roughly the same as Parcae's, our solver typically converges below $\varepsilon$ in well under $T_{\max}$ steps, so the realized depth during training is lower despite identical $T_{\max}$, yielding 25--31\% lower training FLOPs across scales.

\begin{figure}[!ht]
    \centering
    \includegraphics[width=0.75\linewidth,, trim=0 5cm 2.5cm 2.6cm, clip]{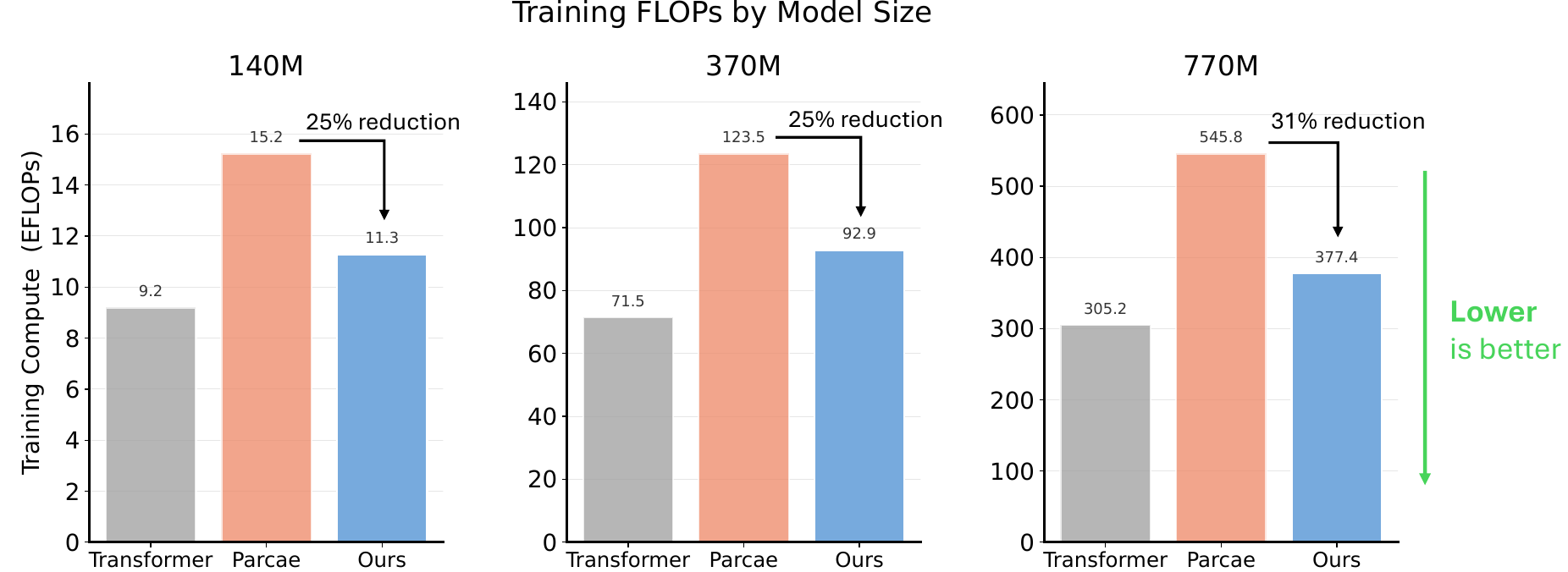}

    \caption{\textbf{Training-time efficiency.}
    Despite being a looped architecture, our method uses 25--31\% fewer FLOPs than Parcae because the fixed-point solver often converges before $T_{\max}$ and the backward pass uses the one-step implicit-gradient approximation.}
    \label{fig:train_flops}
\end{figure}

\noindent
\begin{minipage}[t]{0.43\linewidth}
\textbf{Training efficiency: O(1) memory.} An additional advantage of our method is that the training memory required stays constant with the number of iterations (Figure~\ref{fig:memory_vs_T_large}). This is because our implicit backward pass does not need to store the intermediate activations from every recurrent step. In contrast, standard looped language models must backpropagate through each unrolled iteration, causing memory usage to grow linearly with the number of loops.
\end{minipage}
\hfill
\begin{minipage}[t]{0.46\linewidth}
\vspace{-1em}
\centering
\includegraphics[width=0.8\linewidth]{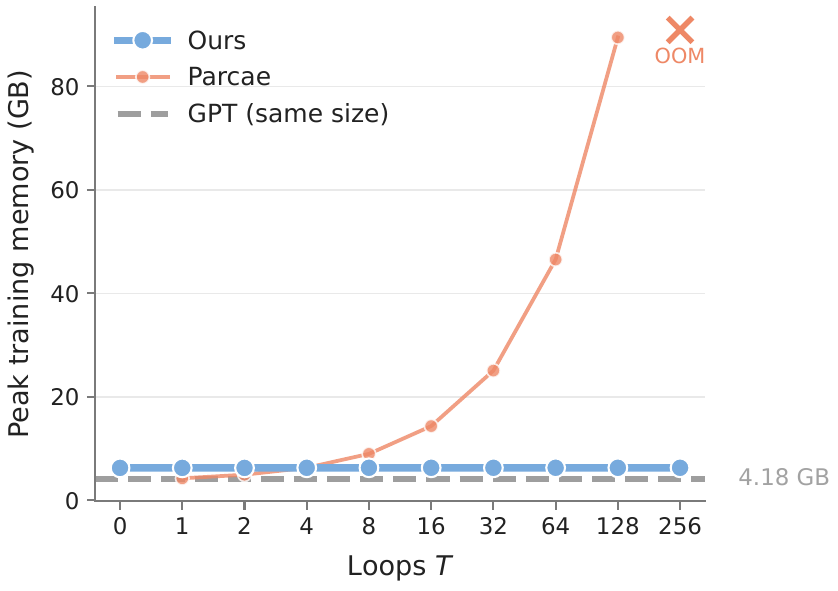}
\captionof{figure}{\textbf{Peak training memory versus recurrent depth.} Attractor Models keep memory nearly constant, while Parcae's memory grows with the number of loops.}
\label{fig:memory_vs_T_large}
\end{minipage}

\begin{greenAIbox}{Result: Attractor Models deliver Pareto improvement on large-scale language modeling.}
Across model sizes, Attractor Models outperform parameter-matched Transformers and stable looped baselines on perplexity and downstream accuracy, while using substantially less training compute and memory than looped models. Notably, the 770M Attractor Model reaches the quality regime of a 1.3B Transformer trained on twice as many tokens.
\end{greenAIbox}

\subsection{Attractor Models Lead to Significant Gains in Hard Reasoning Tasks}
\label{sec:exp-algo}

Beyond language modeling, we train and evaluate Attractor Models on Sudoku-Extreme and Maze-Hard from~\cite{hrm,trm}: two challenging reasoning benchmarks, where non-recurrent Transformers and most frontier LLMs fail.

\textbf{Setup.} We train small models with approximately 1000 training examples for each task and require predicting the full output grid in a single direct forward pass (no autoregressive decoding). We follow the TRM training protocol~\cite{trm}, using deep-supervision steps.

\textbf{Method.} TRM carries two latents across deep-supervision steps, a current answer $y$ and a reasoning state $z$, and applies a tiny two-layer network $T(n{+}1)$ times per step to update them. We keep this protocol except for the inner update. Specifically, instead of unrolling $T(n{+}1)$ applications, we solve directly for the fixed point of the $(y, z)$ update with our solver. The initialization is handled by deep supervision itself: the previous step's $(y, z)$ initializes the solver at the next step, with a learned embedding at step zero, so we do not use a separate backbone.

We present our results in Table~\ref{tab:direct_prediction_results}. The fixed-depth Transformer fails on both tasks. HRM (27M) achieves 55.0\% and 74.5\%, respectively on Sudoku-Extreme and Maze-Hard. TRM is the strongest tiny baseline at 7M, achieving 74.7\% and 85.3\%, but (counter to the goal of scaling) collapses to 0\% on both tasks when scaled to 27M parameters. Our model scales naturally with parameter count. We attribute the difference to the explicit fixed-point objective, which appears to provide regularization that bare iterative refinement lacks at higher capacity.

For the backward pass, we use the phantom-gradient scheme~\cite{phantomgradient} rather than the one-step approximation used in the language-modeling experiments. This choice is important in the small-data reasoning regime: with only $\sim$1{,}000 training examples and much smaller networks, the solver dynamics are more sensitive, and the one-step surrogate can provide too crude a training signal. This is consistent with TRM, which reports that replacing its backward pass with a one-step approximation reduces Sudoku-Extreme accuracy from 87.4\% to 56.5\%~\cite{trm}.

\begin{table}[h]
\centering
\caption{\textbf{Small model and sample training results on Sudoku-Extreme and Maze-Hard.} Our Attractor Models improve when scaling the parameter count, compared to standard tiny recursive models which degrade.}
\label{tab:direct_prediction_results}

\arrayrulecolor{TableRule}
\begin{tabular}{@{} l c c c @{}}
\HeaderStrut
\textbf{Method} & \textbf{\# Parameters} & \textbf{Sudoku-Extreme} $\uparrow$ & \textbf{Maze-Hard} $\uparrow$ \\
\specialrule{1pt}{0pt}{0pt}

Deepseek R1 & 671B & 0.0\% & 0.0\% \\
Claude 3.7 & ? & 0.0\% & 0.0\% \\
O3-mini-high & ? & 0.0\% & 0.0\% \\
\grouprule

Transformer & 27M & 0.0\% & 0.0\% \\
\grouprule

HRM & 27M & 55.0\% & 74.5\% \\
TRM & 7M & 74.7\% & 85.3\% \\
TRM & 27M & 0.0\% & 0.0\% \\
\grouprule

\textcolor{TableAccent}{\textbf{Attractor Model (ours)}} & 7M & 54.3\% & 46.7\% \\
\textcolor{TableAccent}{\textbf{Attractor Model (ours)}} & 27M & \best{\textbf{91.4\%}} & \best{\textbf{93.1\%}} \\

\end{tabular}
\end{table}

\noindent
This setup is still an instance of our Attractor Model framework, where an output-space representation is iteratively refined to an equilibrium, decoded, and differentiated through implicitly. The only difference is the initialization mechanism. Rather than a Transformer backbone producing $\tilde y_0$ from the input, deep supervision supplies the initialization. Specifically, the previous supervision step's $(y, z)$ initializes the solver at the next step, with a learned embedding at step zero.

\begin{greenAIbox}{Result: Attractor Models scale where recursive reasoners collapse.}
In hard reasoning tasks, existing recursive architectures can degrade as model capacity increases. Attractor Models avoid this failure mode. With only 27M parameters and $\sim$1{,}000 training examples, our model reaches 91.4\% accuracy on Sudoku-Extreme and 93.1\% on Maze-Hard, outperforming both frontier LLMs and specialized recursive baselines.
\end{greenAIbox}

\subsection{Equilibrium Internalization and Test-Time Behavior}
\label{sec:exp-test}

\textbf{Fixed-point convergence.}
Figure~\ref{fig:convergence_behavior} visualizes convergence in two complementary ways. 
First, we project the trajectory of the final-position representation onto its first two principal components over 16 iterations. 
Our method rapidly contracts to a fixed point: iterations 8--16 collapse onto a single attractor. Parcae also moves toward a stable state, but does so more slowly; its trajectory remains noisier and continues to drift through later recurrences. 
This suggests that while finite-loop language models can exhibit fixed-point-like behavior, explicitly training the loop as a fixed-point problem produces faster and cleaner convergence. Second, we track the number of solver iterations required during training. 
The DEQ baseline requires increasingly many iterations as optimization progresses, consistent with the observations in the original work \cite{deq}. In contrast, Attractor Models rapidly converge to the minimum iteration count and remain stable. This behavior provides evidence for {equilibrium internalization}, where optimization shifts work from the iterative solver into the backbone proposal.

\begin{figure}[!h]
    \centering

    \begin{subfigure}[t]{0.48\linewidth}
        \centering
        \includegraphics[width=\linewidth]{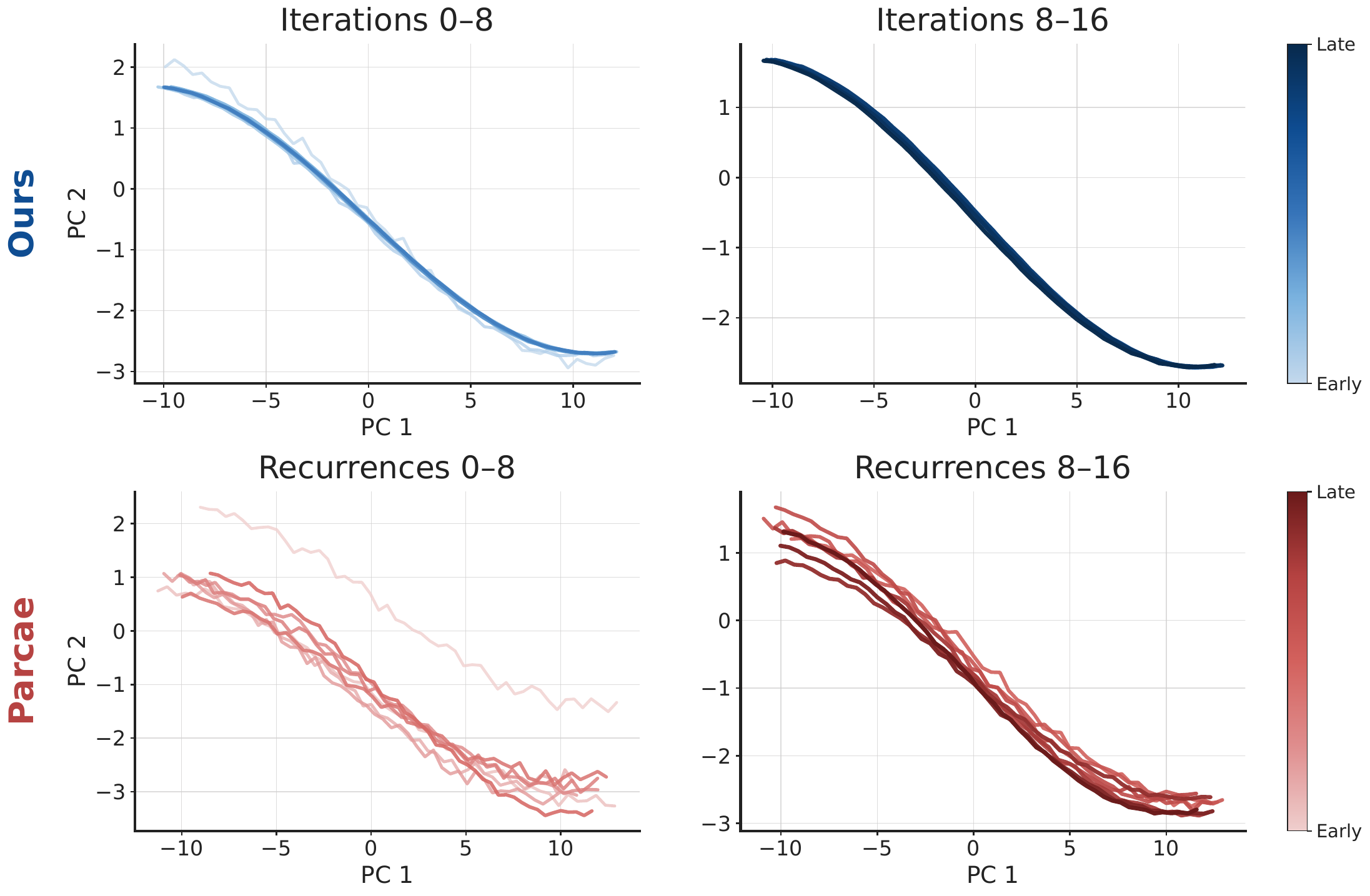}
        \caption{PCA trajectory over 16 iterations.}
        \label{fig:trajectory}
    \end{subfigure}
    \hfill
    \begin{subfigure}[t]{0.48\linewidth}
        \centering
        \includegraphics[width=\linewidth]{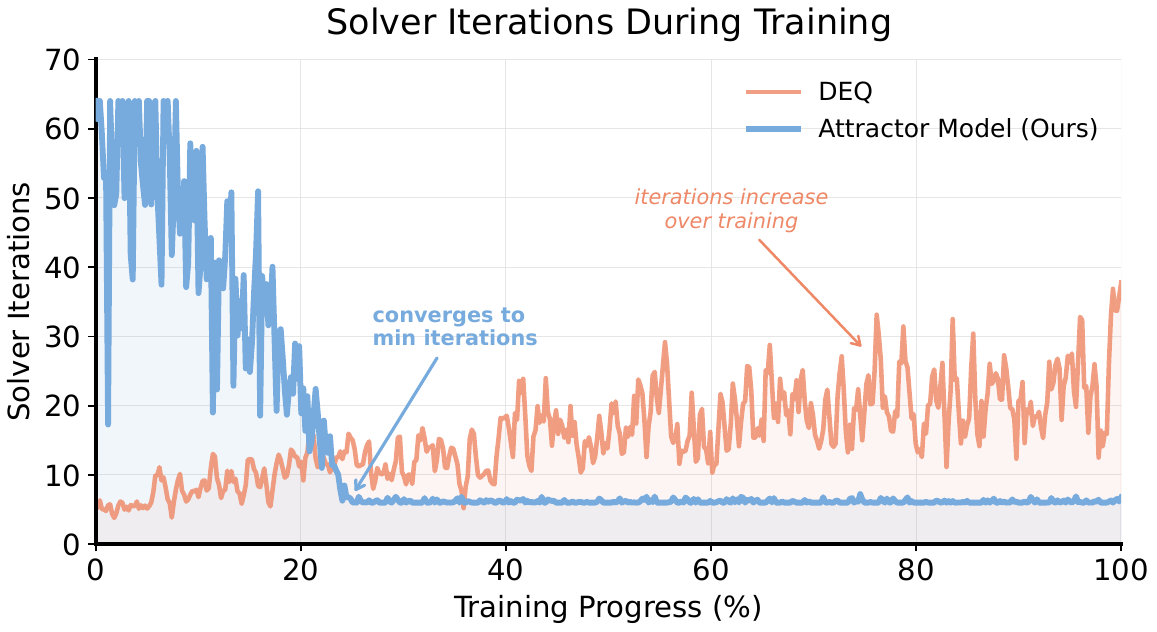}
        \caption{Solver iterations during training.}
        \label{fig:iter_comparison}
    \end{subfigure}

    \caption{\textbf{Convergence behavior of Attractor Models compared to existing methods.}
     \textbf{Left:} PCA projection of the residual stream at the final sequence position over 16 iterations. \textbf{Right:} DEQ baseline requires increasingly many solver iterations during training, whereas Attractor Models rapidly settles to the minimum iteration count and remains stable.}
    \label{fig:convergence_behavior}
\end{figure}

\textbf{Test-time iterations vs.\ quality.}
In Figure~\ref{fig:accuracy_vs_T}, we report validation perplexity, CORE accuracy, and CORE-Extended accuracy as we vary the number of inference iterations $T$ while holding training fixed. For Parcae, quality improves monotonically from $T=1$ until it plateaus near $T=8$, indicating that the model relies on repeated test-time refinement. In contrast, our method reaches peak performance at $T=1$ at every scale, and even $T=0$ (decoding the backbone proposal $\tilde y_0$ directly through the tied unembedding) is already at or near the converged value.

We attribute this behavior to \emph{equilibrium internalization}: during training, the backbone learns to produce a proposal $\tilde y_0$ that already lies close to the fixed point that the solver would otherwise compute through iteration. Thus, the iterative block still shapes the learned representation during training, but at inference time the model has largely internalized the result of that refinement into its backbone initialization. Strikingly, the backbone-only readout at $T=0$ is already stronger than larger standard Transformers trained without attractor assistance, and matches or exceeds finite-loop baselines that require many test-time recurrences. Concretely, our 140M model at $T=0$ (no solver) matches or improves Parcae 140M at $T=8$ recurrent steps; the same pattern holds at 370M and 770M.

\begin{figure}[!h]
    \centering
    \includegraphics[width=\linewidth]{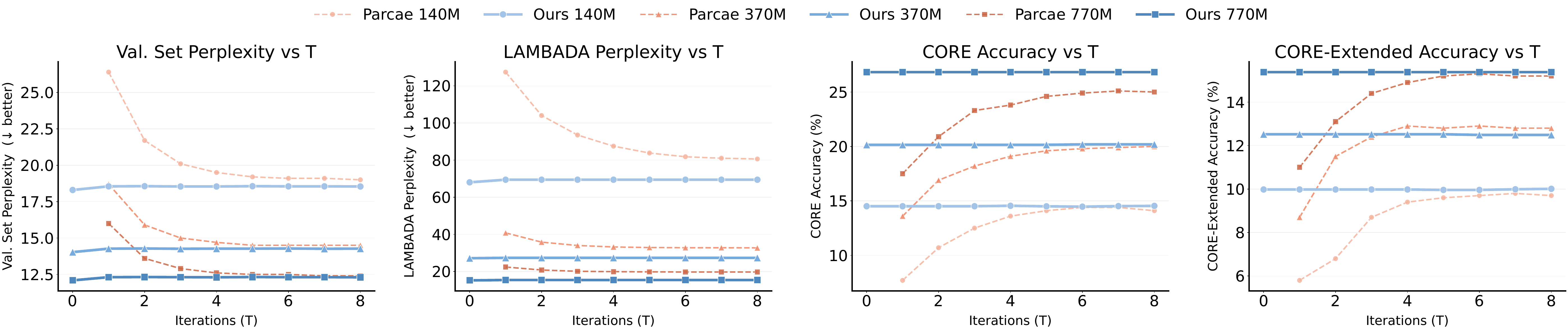}
    \caption{\textbf{Validation perplexity on FineWeb-Edu, Lambada, CORE accuracy, and CORE-Extended accuracy as a function of test-time iterations  $\; T$}. Parcae's quality improves monotonically up to $T \approx \mu_\text{rec} = 8$. Our method is essentially converged at $T=1$ (and at $T=0$ for the 770M model), because the solver is initialized at $\tilde y_0$, which is itself a coherent prediction.}
    \label{fig:accuracy_vs_T}
\end{figure}

\begin{AIbox}{Takeaway: Attractor Models internalize equilibrium.}
Attractor Models learn to initialize a prediction near the fixed point, so inference is effectively converged before any explicit refinement step. As a result, they preserve the benefits of iterative computation during training while avoiding the need for many sequential iterations at test time, unlike standard looped models whose quality depends on repeated inference-time updates.
\end{AIbox}

\subsection{Ablations}
\label{sec:exp-ablation}

For the ablations, we use a 60.3M-parameter version of our model trained on 1B tokens of FineWeb-Edu under the same \texttt{nanochat} configuration. All ablation runs share the same data stream, tokenizer, and optimizer; only the ablated component varies.

We isolate the contribution of (i) the location of the equilibrium and its initialization (Table~\ref{tab:ablation_deq}), (ii) the additive injection of $c$ (Table~\ref{tab:ablation_injection}), (iii) and the one-step backward approximation (Table~\ref{tab:ablation_backward}). All ablations train a 60.3M-parameter model on 1B tokens of FineWeb-Edu under the same \texttt{nanochat} setup, with all other hyperparameters fixed.

\textbf{Comparison to DEQ.}
Table~\ref{tab:ablation_deq} compares Attractor Models against a
parameter-matched DEQ~\cite{deq}. The DEQ baseline solves for a hidden-state
equilibrium initialized from an uninformative state and decodes it with a
separate output head. Tying the unembedding closes part of the gap
(Val.\ PPL $42.18 \to 38.74$), but the largest improvement comes from matching
the equilibrium to the design of Attractor Models: the fixed point is placed
directly in the tied output-embedding space and the root finder is initialized
from the backbone proposal $\tilde y_0=\cT_{\theta_b}(E(x))$ ($\to 34.05$).

This proposal gives the solver a meaningful, input-dependent starting point
rather than requiring the recurrent block to construct the representation from
scratch. As a result, the Attractor Model reaches the same residual tolerance
in $1.7\times$ fewer iterations while also achieving lower perplexity. We view
this as evidence of \emph{equilibrium internalization}: the backbone learns to
produce an output-side proposal $\tilde y_0$ that already lies close to the
eventual equilibrium $\tilde y^\star$, making the subsequent attractor
refinement both easier and faster.

\begin{table}[!ht]
\centering
\caption{\textbf{Comparison to DEQ at matched parameter count.}}
\label{tab:ablation_deq}

\small
\setlength{\tabcolsep}{3pt}
\renewcommand{\arraystretch}{1.05}
\setlength{\abovecaptionskip}{0pt}
\setlength{\belowcaptionskip}{2pt}

\color{TableText}

\begin{tabularx}{\linewidth}{
  @{}
  >{\raggedright\arraybackslash}X
  c
  >{\raggedright\arraybackslash}X
  c
  c
  c
  @{}
}
\arrayrulecolor{TableRule}

\textbf{Method} &
\textbf{Size} &
\textbf{Equilibrium} &
\textbf{Val. PPL} $\downarrow$ &
\textbf{Avg. Iters} $\downarrow$ &
\textbf{Core} $\uparrow$ \\
\specialrule{1pt}{1pt}{1pt}

DEQ~\cite{deq}
& 60.3M
& hidden ($z_0=0$, sep. head)
& 42.18
& 14.6
& $5.21 \pm 0.14$ \\

DEQ + tied unemb.
& 60.3M
& hidden ($z_0=0$)
& 38.74
& 13.9
& $5.83 \pm 0.12$ \\

\grouprule

\textcolor{TableAccent}{\textbf{Attractor Model}}
& 60.3M
& output emb. ($\tilde y_0=\cT_{\theta_b}(E(x))$)
& \best{\textbf{34.05}}
& \best{\textbf{8.4}}
& \best{$\bm{6.74 \pm 0.10}$} \\

\end{tabularx}

\end{table}

\textbf{Proposal injection.}
Table~\ref{tab:ablation_injection} ablates how the backbone proposal
$\tilde y_0$ is provided to the attractor. We use \emph{proposal injection} to denote how $\tilde y_0$ enters the recurrent refinement map, and \emph{persistent injection} to denote providing $\tilde y_0$ at every refinement step rather than only through the initial state.

When the proposal is used only for initialization, $\tilde y_{t=0}=\tilde y_0$ and $\tilde y_{t+1}=\cT_{\theta_a}(\tilde y_t)$, the recurrent update no longer depends on the input after the first state. Consequently, the fixed point can become proposal-independent, and only 12.4\% of validation tokens converge within $T_{\max}$. Persistent injection by concatenation,
$\tilde y_{t+1}=\cT_{\theta_a}([\tilde y_t;\tilde y_0])$, restores proposal-dependence and recovers much of the quality, but makes the refinement problem harder: convergence is slower (11.2 vs.\ 8.4 average iterations) and
perplexity is worse (36.81 vs.\ 34.05). Additive proposal injection,
$\tilde y_{t+1}=\cT_{\theta_a}(\tilde y_t,\tilde y_0)$, provides the backbone
proposal at every step while keeping the refinement map simple and
well-conditioned, yielding the best results across all three metrics.

\begin{table}[!ht]
\caption{\textbf{Effect of input injection on convergence and quality.}}
\centering
\small
\setlength{\tabcolsep}{5pt}
\renewcommand{\arraystretch}{1.22}
\color{TableText}
\begin{tabularx}{\linewidth}{@{\hspace{5pt}} l Y Y Y @{\hspace{5pt}}}
\arrayrulecolor{TableRule}
\HeaderStrut
\textbf{Injection of $c$} &
\textbf{Val. PPL} $\downarrow$ &
\textbf{Avg. Iters} $\downarrow$ &
\textbf{\% Converged} $\uparrow$ \\
\arrayrulecolor{TableRule}\specialrule{1pt}{0pt}{0pt}
Initial only ($\tilde y_0$, no re-injection) & 51.92 & $T_{\max}$ & 12.4\% \\
Concatenation: $\cA_{\theta_a}([\tilde y_t;\tilde y_0])$ & 36.81 & 11.2 & 88.6\% \\
\grouprule
\textcolor{TableAccent}{\textbf{Additive (ours):}} $\cA_{\theta_a}(\tilde y_t + \tilde y_0)$ &
\best{\textbf{34.05}} &
\best{\textbf{8.4}} &
\best{\textbf{99.7\%}} \\
\end{tabularx}
\label{tab:ablation_injection}
\end{table}

\textbf{Backward pass.}
In table~\ref{tab:ablation_backward}, we compare the one-step backward approximation against full implicit differentiation (Anderson on the linear system $(I - J_{\tilde y}^\top) u = v$) and the phantom gradient~\cite{phantomgradient} unroll. Full IFT improves PPL by only 0.14 while increasing peak training memory by $4.8\times$ and step time by $2.7\times$; phantom gradient lies in between. The one-step approximation allows relatively cheap training cost while maintaining nearly all of the original performance of a full backward gradient computation.

\begin{table}[!ht]
\caption{\textbf{Ablation for running the full IFT gradients, phantom gradients, and the one-step approximation.}}
\centering
\small
\setlength{\tabcolsep}{5pt}
\renewcommand{\arraystretch}{1.22}
\color{TableText}
\begin{tabularx}{\linewidth}{@{\hspace{5pt}} l Y Y Y Y @{\hspace{5pt}}}
\arrayrulecolor{TableRule}
\HeaderStrut
\textbf{Backward Pass} &
\textbf{Val. PPL} $\downarrow$ &
\textbf{Train Mem.} $\downarrow$ &
\textbf{Step Time} $\downarrow$ &
\textbf{$\Delta$ vs. Full} \\
\arrayrulecolor{TableRule}\specialrule{1pt}{0pt}{0pt}
Full IFT (Anderson on $u$) & 33.91 & $4.8\times$ & $2.7\times$ & --- \\
Phantom gradient ($k{=}3$)~\cite{phantomgradient} & 34.02 & $1.8\times$ & $1.4\times$ & $+0.11$ \\
\grouprule
\textcolor{TableAccent}{\textbf{Ours:}} one-step ($u\!\approx\!v$)~\cite{jfb} &
\best{\textbf{34.05}} &
\best{$\bm{1.0\times}$} &
\best{$\bm{1.0\times}$} &
$+0.14$ \\
\end{tabularx}
\label{tab:ablation_backward}
\end{table}

\begin{AIbox}{Takeaway: One-step gradient approximation is sufficient for Attractor Models}
Using the one-step gradient approximation enables efficient training, while maintaining nearly all of the quality of using full backward gradients.
\end{AIbox}

\textbf{Initialization ablation.}
Table~\ref{tab:ablation_init} isolates the effect of the solver initialization. Starting the attractor solve from an uninformative state, either zero or Gaussian noise, substantially degrades both quality and convergence. In contrast, initializing from the backbone proposal $\tilde y_0=\mathcal T_{\theta_b}(E(x))$ gives the solver a meaningful output-side starting point, yielding lower perplexity, fewer iterations, and higher downstream accuracy. This supports our hypothesis that the refinement should begin near a coherent prediction embedding rather than constructing one from scratch.

\begin{table}[!ht]
\centering
\caption{\textbf{Effect of attractor solver initialization.} Initializing from the backbone proposal $\tilde y_0$ clearly outperforms uninformative zero or Gaussian noise initialization.}
\label{tab:ablation_init}

\small
\setlength{\tabcolsep}{5pt}
\renewcommand{\arraystretch}{1.18}
\color{TableText}

\begin{tabularx}{\linewidth}{@{\hspace{5pt}} l Y Y Y Y @{\hspace{5pt}}}
\arrayrulecolor{TableRule}
\HeaderStrut
\textbf{Initialization} &
\textbf{Val. PPL} $\downarrow$ &
\textbf{Avg. Iters} $\downarrow$ &
\textbf{\% Converged} $\uparrow$ &
\textbf{Core} $\uparrow$ \\
\arrayrulecolor{TableRule}\specialrule{1pt}{0pt}{0pt}

Zero: $\tilde y_{\mathrm{init}}=0$
& 43.87
& 14.8
& 71.3\%
& $5.42 \pm 0.13$ \\

Gaussian: $\tilde y_{\mathrm{init}}\sim\mathcal N(0,\sigma^2 I)$
& 41.26
& 13.6
& 78.9\%
& $5.71 \pm 0.11$ \\

\grouprule

\textcolor{TableAccent}{\textbf{Backbone proposal:}}
$\tilde y_{\mathrm{init}}=\tilde y_0$
& \best{\textbf{34.05}}
& \best{\textbf{8.4}}
& \best{\textbf{99.7\%}}
& \best{$\bm{6.74 \pm 0.10}$} \\

\end{tabularx}
\end{table}

%% file: sections/conclusion.tex
\vspace{-0.2em}
\section{Conclusion and Future Work}
\label{sec:conclusion}

In this work, we propose Attractor Models, a new family of architectures that first produce meaningful prediction embeddings and then refine them through an attractor module by solving for a fixed point. This formulation makes recurrent refinement stable, memory-efficient, and adaptive, while avoiding the cost of explicit unrolling and achieving strong results across both large-scale language modeling and hard reasoning with tiny models. In stark contrast to prior looped language models, training Attractor Models gives rise to equilibrium internalization: the model learns to make the very refinement procedure that trained it largely unnecessary at inference time. Interesting directions for future work is to further study the equilibrium internalization phenomenon and the differences between Attractor Models and finite-loop recurrence.

\textbf{Acknowledgements.} The authors gratefully acknowledge generous support from Coefficient Giving.

%% file: sections/supplementary.tex
\newpage 
\section{Related Work}
\label{sec:related}

\textbf{Looped and recurrent language models.}
Weight-tied recurrence has re-emerged as an alternative to deeper feed-forward stacks~\cite{weight_tying,inan2017tying,kaiser2016neuralgpuslearnalgorithms,lan2020albertlitebertselfsupervised,dabre2018recurrentstackinglayerscompact,takase2023lessonsparametersharinglayers,bae2025relaxedrecursivetransformerseffective}. Universal Transformers share a single block across depth~\cite{universal_transformer,loop1,sapunov2026universaltransformersneedmemory}, while looped and recurrent language models iterate a recurrent update on a hidden state to enable latent reasoning~\cite{looplm,parcae,geiping,mor,zhengchaingpt,zhangmodr,song2026adaponderlm,knupp2026depth,zeitoun2026hyperloop,mcleish2025teachingpretrainedlanguagemodels,zheng2026chaingpt,zhang2026modr,song2026adaponderlmgatedponderinglanguage,moosa2026understandingdynamiccomputeallocation,knupp2026depthrecurrentattentionmixturesgiving,wu2025parallellooptransformerefficient,zeitoun2026hyperlooptransformers,pappone2025twoscalelatentdynamicsrecurrentdepth,rauba2026tinyautoregressiverecursivemodels,zeng2026ponderlmpretraininglanguagemodels,tur2026recurrentdepthvlaimplicittesttime,komisarczyk2026recursiveinferencemachinesneural,ghugare2026roleiterativecomputationreinforcement,cameron2026stepforwardksteps,williams2026prioritizeprocessjustoutcome,han2026hierarchicalvsflatiteration,goyal2026eltelasticloopedtransformers,tang2026looprptreinforcementpretraininglooped}. Latent (implicit) reasoning methods such as Coconut~\cite{coconut} and related methods~\cite{nextlat,chen2025innerthinkingtransformerleveraging,xu2026formalcomparisonchainthought,jeddi2026loopformerelasticdepthloopedtransformers,fu2026thinkathardselectivelatentiterations,zhou2025coevolutionary,geiping2025efficientparallelsamplersrecurrentdepth,hu2025beliefstatetransformer,fleuret2025freetransformer} perform reasoning in continuous representation space rather than through chain-of-thought tokens. These approaches unroll the loop for a fixed number of steps at training time, causing training memory to grow linearly in depth, coupling inference iterations to training, and often degrading quality when iterations are extended at test time~\cite{looplm}. From theoretical and mechanistic analysis perspectives, looped and continuous thinking models have shown to offer benefits over non-recurrent models \cite{loop2,loop3,loop5,loop6,looped1,giannou2023loopedtransformersprogrammablecomputers,yang2024loopedtransformersbetterlearning,gatmiry2024roledepthloopingincontext,huang2025transformerslearnimplementmultistep,xu2025expressivepowerloopedtransformers,merrill2025littledepthgoeslong,labovich2026stabilitygeneralizationloopedtransformers,zhu2025reasoningsuperpositiontheoreticalperspective,zhu2025emergence,gatmiry2024can,merrill2026little}. Mechanistic analysis shows that the recurrent updates of looped LMs in fact converge to a fixed point for the vast majority of tokens, directly motivating our formulation~\cite{mech}.

\textbf{Implicit fixed-point models.}
Deep Equilibrium Models (DEQs) replace finite unrolling with a fixed-point equation $z^\star = f_\theta(z^\star, x)$ and train through it via implicit differentiation, decoupling effective depth from training memory~\cite{deq}. Solvers typically use Anderson acceleration~\cite{anderson}, with backward passes ranging from full implicit differentiation to cheap surrogates such as one-step~\cite{jfb} and phantom~\cite{phantomgradient} gradients. Standard DEQs solve for a hidden state initialized from zero and decode through a separate output head. In Attractor Models, the equilibrium instead lives in the tied output-embedding space and is warm-started from a meaningful backbone prediction; we find this structure essential for stable training at language-modeling scale and show it gives rise to equilibrium internalization (Section~\ref{sec:exp-test}).

\textbf{Tiny recursive reasoners.}
On small-data algorithmic benchmarks such as Sudoku and maze solving, hierarchical and tiny recursive networks (HRM, TRM)~\cite{hrm,trm} achieve strong accuracy with few parameters, but exhibit a ``less is more'' behavior where performance collapses as model size grows~\cite{trm}. Attractor Models retain the iterative-refinement benefit of these architectures while scaling cleanly with parameter count).

\textbf{Positioning.}
Looped LMs couple three quantities through unrolled BPTT: inference depth, training depth, and training memory~\cite{looplm}. Attractor Models decouple all three: the equilibrium is defined by a residual equation in output-embedding space, the number of solver evaluations is chosen adaptively by tolerance $\varepsilon$, and training memory in the recurrent block is constant in effective depth. Empirically, this combines the strengths of feed-forward and recurrent models—better perplexity than parameter-matched Transformers, 25--31\% lower training FLOPs and constant memory relative to looped LMs, and clean scaling on hard reasoning where specialized recursive networks collapse.

\section{Theory of Looped and Attractor Models}
\label{sec:theory}

\subsection{Well-posedness and the implicit gradient}

Fix an input $x$ and parameters $\theta$. Throughout this section, we write $F(y) \triangleq f_\theta(y, E(x))$ for brevity, so that the fixed-point equation $y^\star = f_\theta(y^\star, E(x))$ becomes $y^\star = F(y^\star)$. Let $J_F(y) \triangleq \partial F / \partial y$ denote the Jacobian of $F$ with respect to its state input. We work in a generic norm $\|\cdot\|$ on $\mathbb{R}^{L \times d}$ and its induced operator norm; for concreteness, can take both to be Frobenius and spectral, respectively. Assumptions are borrowed from~\cite{mdeq,krantz}.

\begin{assumption}[Local contraction]
\label{ass:contraction}
There exist a point $\bar y \in \mathbb{R}^{L \times d}$, a radius
$r > 0$, and a constant $L \in [0, 1)$ such that
\begin{enumerate}
    \item $F$ maps the closed ball
          $B_r(\bar y) \triangleq \{y : \|y - \bar y\| \le r\}$
          into itself, and
    \item $F$ is $L$-Lipschitz on $B_r(\bar y)$:
          \[
              \|F(y) - F(y')\| \;\le\; L \,\|y - y'\|
              \qquad \text{for all } y, y' \in B_r(\bar y).
          \]
\end{enumerate}
\end{assumption}

When $F$ is continuously differentiable, a sufficient form of (ii) is $\sup_{y \in B_r(\bar y)} \|J_F(y)\| \le L$.

\begin{theorem}[Well-posedness and implicit gradient]
\label{thm:wellposed}
Under Assumption~\ref{ass:contraction}, the following hold.
\begin{enumerate}
    \item \emph{Existence and uniqueness:} There is a unique $y^\star \in B_r(\bar y)$ with $y^\star = F(y^\star)$.
    \item \emph{The picard iteration converges linearly:} For any $y_0 \in B_r(\bar y)$, the iterates $y_{k+1}= F(y_k)$ remain in $B_r(\bar y)$ and satisfy
    \begin{equation}
        \|y_k - y^\star\| \;\le\; L^k \,\|y_0 - y^\star\|.
        \label{eq:picard_rate}
    \end{equation}
    \item \emph{Validity of the implicit gradient:} If $F$ is continuously differentiable on $B_r(\bar y)$, then $I - J_F(y^\star)$ is invertible, $y^\star$ depends continuously differentiably on $\theta$ in a neighborhood of the current parameters, and
    \begin{equation}
        \frac{\partial y^\star}{\partial \theta}
        \;=\;
        \bigl(I - J_F(y^\star)\bigr)^{-1}
        \left.\frac{\partial f_\theta}{\partial \theta}\right|_{y^\star}.
        \label{eq:implicit_grad_thm}
    \end{equation}
\end{enumerate}
\end{theorem}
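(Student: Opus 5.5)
The plan is to derive all three parts from the Banach fixed-point theorem applied on the closed ball, followed by a Neumann-series argument and the implicit function theorem.

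\textbf{Part (i): existence and uniqueness.} The ball $B_r(\bar y)$ is a closed subset of the finite-dimensional space $\mathbb{R}^{L\times d}$. It is therefore a complete metric space under the chosen norm. By Assumption~\ref{ass:contraction}(i), $F$ maps this set into itself, and by (ii) it is an $L$-contraction with $L<1$. The Banach fixed-point theorem then gives a unique fixed point $y^\star\in B_r(\bar y)$. I would include the short proof for completeness: the Picard iterates form a Cauchy sequence because $\|y_{k+1}-y_k\|\le L^k\|y_1-y_0\|$, and closedness of the ball ensures the limit lies in it. Uniqueness follows because two fixed points would satisfy $\|y^\star-y'\|\le L\|y^\star-y'\|$, which forces them to coincide.

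\textbf{Part (ii): linear convergence.} Invariance of the ball under $F$ gives $y_k\in B_r(\bar y)$ by induction. Since $y^\star=F(y^\star)$ also lies in the ball, the Lipschitz bound gives
\[
\|y_{k+1}-y^\star\|=\|F(y_k)-F(y^\star)\|\le L\|y_k-y^\star\|.
\]
Induction then yields \eqref{eq:picard_rate}.

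\textbf{Part (iii): invertibility of $I-J_F(y^\star)$.} This is the main obstacle, because $y^\star$ may lie on the boundary of the ball. There a directional derivative in an outward direction is not obviously controlled by the Lipschitz bound, which only holds inside the ball.

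I would first handle the interior case. For any $v$ and small $t>0$, the Lipschitz bound gives $\|F(y^\star+tv)-F(y^\star)\|\le Lt\|v\|$. Dividing by $t$ and letting $t\to0$ gives $\|J_F(y^\star)v\|\le L\|v\|$. This requires $y^\star+tv$ to stay in the ball for small $t$, which holds when $y^\star$ is interior.

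For the boundary case I would use two facts. First, the directions $v$ with $y^\star+tv\in B_r(\bar y)$ for small $t$ (the inward directions) form a cone with nonempty interior. Second, $v\mapsto J_F(y^\star)v$ is linear, and the bound $\|J_F(y^\star)v\|\le L\|v\|$ holds on this cone. I would extend the bound to all $v$ by using the convexity of the ball: since $v$ and $-v$ cannot both fail to be inward only through a tangent issue, continuity of $J_F$ lets me take limits from interior points $y\to y^\star$, where the bound already holds in every direction. If this step becomes awkward, the fallback is to read ``continuously differentiable on $B_r(\bar y)$'' as differentiable on an open neighborhood. One could then invoke the paper's remark that $\sup_{B_r}\|J_F\|\le L$ as the working hypothesis.

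With $\|J_F(y^\star)\|\le L<1$ in hand, the Neumann series $\sum_k J_F(y^\star)^k$ converges. It equals $(I-J_F(y^\star))^{-1}$, with norm at most $1/(1-L)$.

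\textbf{Part (iii): implicit gradient.} I would apply the implicit function theorem to
\[
G(y,\theta)\triangleq y-f_\theta(y,E(x))
\]
at $(y^\star,\theta)$. This requires $f$ to be jointly $C^1$ in $(y,\theta)$, which I would state as the implicit standing regularity assumption. The partial derivative $\partial_yG=I-J_F(y^\star)$ is invertible by the previous step. The theorem then gives a $C^1$ map $\theta'\mapsto y^\star(\theta')$ near $\theta$ satisfying $G(y^\star(\theta'),\theta')=0$.

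Differentiating this identity gives
\[
(I-J_F)\,\partial_\theta y^\star-\partial_\theta f_\theta=0,
\]
and solving for $\partial_\theta y^\star$ yields \eqref{eq:implicit_grad_thm}.

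Finally, I would check that the locally defined solution from the implicit function theorem agrees with the unique fixed point from part (i). For $\theta'$ near $\theta$, continuity keeps $y^\star(\theta')$ inside the ball. If the contraction hypothesis persists for nearby parameters, uniqueness from part (i) then identifies the two.
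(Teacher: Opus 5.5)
Your proposal is correct and takes the same route as the paper: Banach's fixed-point theorem on the complete closed ball gives (i)--(ii), then $\|J_F(y^\star)\|\le L<1$ and a Neumann series give invertibility of $I-J_F(y^\star)$, and the implicit function theorem gives the gradient formula. The paper simply asserts $\|J_F(y^\star)\|\le L$ from the Lipschitz bound and takes joint $C^1$ regularity in $\theta$ for granted, so your extra care is a genuine refinement; at a boundary fixed point, your limit from interior points via continuity of $J_F$ is what closes the argument, and you can drop the inward-cone heuristic (for a general norm it would not suffice on its own).
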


\begin{proof}
Parts (i) and (ii) follow from the Banach fixed-point theorem applied to the contraction $F$ on the closed (hence complete) ball $B_r(\bar y) \subset (\mathbb{R}^{L \times d}, \|\cdot\|)$.

For (iii), Lipschitz continuity with constant $L$ gives $\|J_F(y^\star)\| \le L < 1$, so the spectral radius satisfies $\rho(J_F(y^\star)) \le \|J_F(y^\star)\| < 1$. The Neumann series $\sum_{k \ge 0} J_F(y^\star)^k$ therefore converges to $(I - J_F(y^\star))^{-1}$, and in particular $I - J_F(y^\star)$ is invertible. Define $G(y, \theta) \triangleq f_\theta(y, E(x)) - y$, which is $C^1$ in both arguments and satisfies $G(y^\star, \theta) = 0$ with

\begin{equation}
    \left.\frac{\partial G}{\partial y}\right|_{y^\star}
    \;=\; J_F(y^\star) - I,
\end{equation}

which is invertible. The implicit function theorem applied to $G = 0$ yields a unique $C^1$ map $\theta \mapsto y^\star(\theta)$ in a neighborhood of the current parameters, with

\begin{equation}
    \frac{\partial y^\star}{\partial \theta}
    \;=\;
    -\left(\frac{\partial G}{\partial y}\right)^{-1}
     \frac{\partial G}{\partial \theta}
    \;=\;
    \bigl(I - J_F(y^\star)\bigr)^{-1}
    \left.\frac{\partial f_\theta}{\partial \theta}\right|_{y^\star}.
\end{equation}

\end{proof}

Equation~\eqref{eq:implicit_grad_thm} is exactly the gradient computed in the backward pass of Section~\ref{sec:method}. The linear solve is convergent for the same reason: since $\|J_F^\top(y^\star)\| = \|J_F(y^\star)\| \le L < 1$, the Neumann iteration $u \leftarrow J_F^\top(y^\star)\, u + v$ converges geometrically with rate $L$ to the unique solution $u = (I - J_F^\top(y^\star))^{-1} v$.

\subsection{Looped language models are fixed-point iterators}
A LoopLM~\cite{looplm} of depth $T$ with the same weight-tied block $f_\theta$ and warm start $y_0 = E(x)$ produces
\begin{equation}
    y_T^{\mathrm{loop}}
    \;\triangleq\;
    F^{(T)}\bigl(E(x)\bigr)
    \;=\;
    \underbrace{F \circ F \circ \cdots \circ F}_{T \text{ times}}\bigl(E(x)\bigr).
    \label{eq:looplm_unroll}
\end{equation}

This is exactly $T$ Picard iterations of our residual $g_\theta(\cdot, x)$ warm-started from the input embedding. See the following corollary:

\begin{corollary}[LoopLM as a truncated approximation]
\label{cor:looplm}
Suppose Assumption~\ref{ass:contraction} holds and the warm start $E(x) \in B_r(\bar y)$. Then for every $T\ge 0$,
\begin{equation}
    \|y_T^{\mathrm{loop}} - y^\star\|
    \;\le\;
    L^T \,\|E(x) - y^\star\|.
    \label{eq:looplm_bound}
\end{equation}
In particular, $y_T^{\mathrm{loop}} \to y^\star$ as $T \to \infty$, and the discrepancy between a LoopLM of depth $T$ and our model decays geometrically in $T$.
\end{corollary}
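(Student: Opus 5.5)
The bound is a direct specialization of part (ii) of Theorem~\ref{thm:wellposed}. The plan is to identify the LoopLM unroll \eqref{eq:looplm_unroll} with the Picard iteration of that theorem, started at $y_0 = E(x)$. We then read off \eqref{eq:looplm_bound} from \eqref{eq:picard_rate}.

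First, we record that under Assumption~\ref{ass:contraction}, part (i) of Theorem~\ref{thm:wellposed} provides the unique fixed point $y^\star \in B_r(\bar y)$. This is the equilibrium our model decodes, so it is the correct comparison point. Next, we set $y_0 \triangleq E(x)$, which lies in $B_r(\bar y)$ by hypothesis, and define $y_{k+1} = F(y_k)$. A trivial induction on $T$ shows $y_T = F^{(T)}(E(x)) = y_T^{\mathrm{loop}}$. The case $T=0$ is the identity map, and the step follows from $F^{(T+1)} = F\circ F^{(T)}$. The invariance $F(B_r(\bar y)) \subseteq B_r(\bar y)$ from Assumption~\ref{ass:contraction}(i) keeps every iterate in the ball, so the Lipschitz bound applies at each step.

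Part (ii) of Theorem~\ref{thm:wellposed} then gives $\|y_T^{\mathrm{loop}} - y^\star\| \le L^T\|E(x) - y^\star\|$ for every $T \ge 0$. To be self-contained, one can also derive this directly. Since $y^\star = F(y^\star)$ and both points lie in the ball,
\[
\|y_{k+1}-y^\star\| = \|F(y_k)-F(y^\star)\| \le L\|y_k-y^\star\|,
\]
and iterating this inequality gives the bound. For the limit statement, $\|E(x)-y^\star\| \le 2r$ because both points lie in $B_r(\bar y)$. Since $L\in[0,1)$, we have $L^T \to 0$, and therefore $y_T^{\mathrm{loop}} \to y^\star$ with geometric rate $L$.

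There is no substantive obstacle. The only point requiring care is bookkeeping: the LoopLM must use the same weight-tied map $F$ as our model, and the warm start $E(x)$ must lie in the invariant ball. Both are hypotheses of the corollary. One should also note that the state space and norm are the same $\mathbb{R}^{L\times d}$ setting as in Theorem~\ref{thm:wellposed}, so that the theorem applies verbatim.
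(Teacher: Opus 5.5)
Your proposal is correct and takes the same approach as the paper, whose proof is the single step of applying Theorem~\ref{thm:wellposed}(ii) with $y_0 = E(x)$. Identifying the unroll with the Picard iterates, the invariance of the ball, and the direct Lipschitz induction are simply that step written out in full.
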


\begin{proof}
Apply Theorem~\ref{thm:wellposed}(ii) with $y_0 = E(x)$.
\end{proof}

Two consequences are worth highlighting:

\textbf{The limit:}
Our fixed-point model is the $T \to \infty$ limit of LoopLM with shared parameters. Training a LoopLM at any finite depth $T$ implicitly trains an approximation to the same equilibrium $y^\star$, with the approximation error controlled by~\eqref{eq:looplm_bound}.

\textbf{The iteration count at inference:}
Picard iteration converges at the linear rate $L$. Our forward pass uses Anderson acceleration, which under standard regularity conditions converges \emph{superlinearly} near $y^\star$~\cite{anderson}. To reach a target residual tolerance $\varepsilon$, the root finder therefore typically requires fewer evaluations of $f_\theta$ than the unroll depth $T$ a LoopLM would need for a comparable residual. The exact crossover depends on $L$ and on the solver hyperparameters.

\textbf{Caveat:}
Assumption~\ref{ass:contraction} is local and is not guaranteed to hold for an arbitrarily trained transformer block. In our experiments, we generally notice that $\|J_F(y^\star)\| \le L < 1$, the authors of DEQ also find that using a regularizer on the Jacobian to enforce this helps~\cite{jacreg}.

\subsection{Where looped language models fall short}
\label{subsec:fall}
\subsubsection{Attractor Models}

The initialization $\tilde y_0$ and the fixed point $\tilde y^\star$ live in the same tied output-embedding space, so either can be decoded by the unembedding $E^\top$. The loss depends only on $\tilde y^\star$, and the implicit gradient with respect to the backbone parameters is
\[
    \nabla_{\theta_b}\mathcal L
    \;=\;
    u^\top J_{\tilde y_0}\,\nabla_{\theta_b}\tilde y_0,
    \qquad
    u=(I-J_{\tilde y}^\top)^{-1}v .
\]
Thus, the loss may be viewed as a function of the backbone's output embedding: perturbations in $\tilde y_0$ affect $\mathcal L$ by perturbing $\tilde y^\star$ in the same space. The backbone is therefore optimized as a standalone next-token predictor whose embedding is decoded by $E^\top$.

When the backbone has strictly greater capacity than the weight-tied attractor, the joint optimum places the prediction work in the backbone. Consequently, $\tilde y_0(x;\theta_b^\star)$ approaches the loss-minimizing embedding. The fixed-point constraint
\[
    \cT_{\theta_a^\star}(\tilde y^\star,\tilde y_0)=\tilde y^\star
\]
is then consistent only when $\tilde y^\star=\tilde y_0$, yielding
\[
    \cT_{\theta_a^\star}(\tilde y_0,\tilde y_0)=\tilde y_0 .
\]

\subsubsection{Standard looped LMs}

In standard looped language models, the latent initialization $h_0$ is uninformative, typically zero or noise, and occupies the role of a hidden state rather than an output embedding. Decoding $h_0E^\top$ is therefore not meaningful.

Gradients instead flow by backpropagation through time along the unrolled trajectory $(h_0,\dots,h_T)$. There is no term in this gradient that drives $h_0$ toward the loss-bearing final state $h_T$ in embedding distance: the initialization is fixed by the architecture and is not produced by a separately trained predictor. \qed

\subsubsection{Implicit-Gradient Barrier}

The implicit gradient
\[
    \nabla_\theta \mathcal{L}_\infty
    =
    v^\top (I-J_g^\top)^{-1}\partial_\theta g
\]
contains the factor $(I-J_g)^{-1}$. This inverse is undefined when $1\in\sigma(J_g)$, and its operator norm scales as
\[
    \|(I-J_g)^{-1}\|
    \sim
    \mathrm{dist}(1,\sigma(J_g))^{-1}
\]
near such a singularity.

Along any continuous gradient-descent path, the eigenvalues of $J_g(y^\star;\theta)$ vary continuously. Therefore, for the dominant eigenvalue to leave the unit disk through $+1$---the canonical loss-of-contraction route for residual iteration maps---it must approach $1$. This forces $\|(I-J_g)^{-1}\|\to\infty$.

Unless $v$ is orthogonal to the offending eigendirection, a codimension-one and hence non-generic condition, the gradient norm diverges. The descent step therefore blows up before the boundary can be crossed, confining $\theta$ to the contractive region $\{\rho(J_g)<1\}$.

For exits through complex eigenvalues, the same conclusion holds under the one-step JFB approximation, since the truncated Neumann series
\[
    \sum_k (J_g^\top)^k
\]
diverges as $\rho(J_g)\to 1$. \qed

\subsubsection{Interpretation}

Fixed-loop training has no inherent mechanism that favors contractive iterations. An optimum in which $g_\theta$ perturbs the embedding for $K$ steps and only happens to land accurately at step $K$ is a valid fixed-loop solution. These are precisely the solutions that fail when extra loops are run at inference.

Equilibrium training cannot reach such solutions, because the implicit gradient creates a barrier of diverging gradients around non-contractive regions. As a result, the trajectory of $\theta$ is confined to the regime in which the solver converges, the one-step gradient is a descent direction, and additional iterations remain stable.

The mechanistic observation~\cite{mech} that fixed points emerge only late in standard looped training is consistent with this picture: the loss landscape contains a basin near contractive solutions, but only equilibrium training applies pressure toward that basin from the beginning of training.

\newpage 

\section{Hyperparamter and Experimental Settings}
\label{sec:hyperparam}
We use NVIDIA H200 GPUs for all of our experiments.

\begin{table}[!ht]
\centering
\small
\setlength{\tabcolsep}{4pt}
\renewcommand{\arraystretch}{1.22}
\caption{Architecture hyperparameters for each model family and scale.}
\label{tab:arch_hyperparams}
\begin{tabular}{l ccc ccc ccc}
\toprule
& \multicolumn{3}{c}{\textbf{Transformer}} & \multicolumn{3}{c}{\textbf{Parcae}} & \multicolumn{3}{c}{\textbf{Attractor Model}} \\
\cmidrule(lr){2-4} \cmidrule(lr){5-7} \cmidrule(lr){8-10}
\textbf{Hyperparameter} & 140M & 370M & 770M & 140M & 370M & 770M & 140M & 370M & 770M \\
\midrule
$d_\text{model}$        & 768  & 1024 & 1280 & 768  & 1024 & 1280 & 1024 & 1280 & 1280 \\
$d_\text{ff}$            & 3072 & 4096 & 5120 & 3072 & 4096 & 5120 & 4096 & 5120 & 5120 \\
Attention heads          & 6    & 8    & 10   & 6    & 8    & 10   & 8    & 10   & 10   \\
\midrule
Total layers             & 6    & 12   & 18   & 6    & 12   & 18   & 8    & 17   & 37   \\
\quad Prelude            & ---  & ---  & ---  & 2    & 4    & 6    & 7    & 15   & 35   \\
\quad Core (recurrent)   & ---  & ---  & ---  & 2    & 4    & 6    & 1    & 2    & 2    \\
\quad Coda               & ---  & ---  & ---  & 2    & 4    & 6    & 0    & 0    & 0    \\
Mean recurrence $T$      & 1    & 1    & 1    & 8    & 8    & 8    & \multicolumn{3}{c}{(solver-determined)} \\
\midrule
Sequence length          & \multicolumn{9}{c}{2048} \\
Vocab size               & \multicolumn{9}{c}{32768} \\
Tie embeddings           & \multicolumn{9}{c}{\cmark} \\
Norm                     & \multicolumn{9}{c}{RMSNorm ($\epsilon = 10^{-5}$)} \\
Activation               & \multicolumn{9}{c}{ReLU$^2$} \\
QK-norm                  & \multicolumn{9}{c}{\cmark} \\
Precision                & \multicolumn{9}{c}{bf16-mixed} \\
\bottomrule
\end{tabular}
\end{table}

\begin{table}[!ht]
\centering
\small
\setlength{\tabcolsep}{5pt}
\renewcommand{\arraystretch}{1.22}
\caption{Fixed-point solver hyperparameters for Attractor Model. Parcae and Transformer do not use a solver.}
\label{tab:solver_hyperparams}
\begin{tabular}{l c c c}
\toprule
\textbf{Hyperparameter} & \textbf{140M} & \textbf{370M} & \textbf{770M} \\
\midrule
Forward solver & Anderson & Anderson & Anderson \\
Max iterations (fwd) & 64 & 64 & 32 \\
Min iterations (fwd) & 6 & 8 & 6 \\
Tolerance (fwd) & $3 \times 10^{-4}$ & $2 \times 10^{-4}$ & $1.5 \times 10^{-4}$ \\
Anderson $m$ & 5 & 5 & 5 \\
Anderson $\beta$ & 1.0 & 1.0 & 0.96 \\
\midrule
Backward type & one-step IFT & one-step IFT & Anderson \\
Max iterations (bwd) & 64 & 64 & 32 \\
Min iterations (bwd) & 6 & 6 & 6 \\
Tolerance (bwd) & $3 \times 10^{-4}$ & $2 \times 10^{-4}$ & $1.5 \times 10^{-4}$ \\
Adjoint grad clip & --- & --- & 2.0 \\
\midrule
Layer-scale init ($\gamma_0$) & 0.75 & 0.75 & 0.73 \\
$\gamma_{\max}$ & 0.75 & 1.0 & 0.9 \\
FP LR scale & 0.5 & 0.4 & 0.4 \\
FP weight decay & 0.1 & 0.1 & 0.1 \\
\bottomrule
\end{tabular}
\end{table}

\begin{table}[!ht]
\centering
\small
\setlength{\tabcolsep}{4pt}
\renewcommand{\arraystretch}{1.22}
\caption{Training hyperparameters. All three model families share the same optimizer, schedule, and data configuration at each scale. Differences are noted where they occur.}
\label{tab:train_hyperparams}
\begin{tabular}{l ccc}
\toprule
\textbf{Hyperparameter} & \textbf{140M} & \textbf{370M} & \textbf{770M} \\
\midrule
Training tokens & 11.2B & 29.6B & 61.6B \\
Dataset & \multicolumn{3}{c}{FineWeb-Edu 100B (shuffled)} \\
Tokenizer & \multicolumn{3}{c}{BPE, 32768 vocab} \\
\midrule
Optimizer & \multicolumn{3}{c}{MuonAdamW} \\
AdamW LR & $8 \times 10^{-3}$ & $8 \times 10^{-3}$ & $5$--$6 \times 10^{-3}$\textsuperscript{$\dagger$} \\
AdamW $(\beta_1, \beta_2)$ & \multicolumn{3}{c}{(0.8, 0.95)} \\
AdamW $\epsilon$ & \multicolumn{3}{c}{$10^{-10}$} \\
AdamW weight decay & \multicolumn{3}{c}{0} \\
Muon LR & $8 \times 10^{-3}$ & $8 \times 10^{-3}$ & $6$--$8 \times 10^{-3}$\textsuperscript{$\dagger$} \\
Muon momentum & \multicolumn{3}{c}{0.95} \\
Muon weight decay & \multicolumn{3}{c}{0.2 (linear decay to 0)} \\
Muon NS steps & \multicolumn{3}{c}{5} \\
\midrule
LR schedule & \multicolumn{3}{c}{Trapezoid (0\% warmup, 50\% cooldown)} \\
Min LR & \multicolumn{3}{c}{0} \\
Gradient clipping & \multicolumn{3}{c}{1.0} \\
\midrule
Global batch size (tokens) & \multicolumn{3}{c}{524K ($= 256 \times 2048$)} \\
Micro batch size & 32 & 32 & 8--32\textsuperscript{$\dagger$} \\
Gradient checkpointing & \xmark & \xmark & model-dependent\textsuperscript{$\ddagger$} \\
\texttt{torch.compile} & \cmark\textsuperscript{$\star$} & \cmark\textsuperscript{$\star$} & \cmark\textsuperscript{$\star$} \\
Strategy & \multicolumn{3}{c}{DDP} \\
\midrule
Init strategy & \multicolumn{3}{c}{scaled-zero + orthogonal} \\
Seed & \multicolumn{3}{c}{42} \\
\bottomrule
\end{tabular}

\vspace{0.5em}
\raggedright
\footnotesize
\textsuperscript{$\dagger$}At 770M, Attractor Model uses AdamW LR $5 \times 10^{-3}$ and Muon LR $6 \times 10^{-3}$; Transformer/Parcae use $6 \times 10^{-3}$ / $8 \times 10^{-3}$.\\
\textsuperscript{$\ddagger$}Gradient checkpointing enabled for 770M Attractor Model and Parcae; disabled for Transformer.\\
\textsuperscript{$\star$}\texttt{torch.compile} enabled for Transformer and Parcae; disabled for Attractor Model (implicit-gradient hooks are not compile-compatible).
\end{table}

\begin{table}[!ht]
\centering
\small
\setlength{\tabcolsep}{5pt}
\renewcommand{\arraystretch}{1.22}
\caption{Parcae recurrence hyperparameters.}
\label{tab:parcae_recurrence}
\begin{tabular}{l c c c}
\toprule
\textbf{Hyperparameter} & \textbf{140M} & \textbf{370M} & \textbf{770M} \\
\midrule
Injection type & \multicolumn{3}{c}{diagonal} \\
State init & \multicolumn{3}{c}{like-init} \\
Mean recurrence & 8 & 8 & 8 \\
Mean backprop depth & 4 & 4 & 4 \\
Sampling scheme & \multicolumn{3}{c}{poisson-truncated-full} \\
Iteration method & \multicolumn{3}{c}{per-sequence} \\
Prelude norm & \multicolumn{3}{c}{\cmark} \\
\bottomrule
\end{tabular}
\end{table}

%% file: main.bib
@inproceedings{deq,
  title     = {Deep Equilibrium Models},
  author    = {Bai, Shaojie and Kolter, J. Zico and Koltun, Vladlen},
  booktitle = {Advances in Neural Information Processing Systems 32 (NeurIPS 2019)},
  year      = {2019}
}

@inproceedings{weight_tying,
  author    = {Press, Ofir and Wolf, Lior},
  title     = {Using the Output Embedding to Improve Language Models},
  booktitle = {Proceedings of the 15th Conference of the European Chapter of the Association for Computational Linguistics: Volume 2, Short Papers},
  pages     = {157--163},
  year      = {2017}
}

@misc{looplm,
      title={Scaling Latent Reasoning via Looped Language Models}, 
      author={Rui-Jie Zhu and Zixuan Wang and Kai Hua and Tianyu Zhang and Ziniu Li and Haoran Que and Boyi Wei and Zixin Wen and Fan Yin and He Xing and Lu Li and Jiajun Shi and Kaijing Ma and Shanda Li and Taylor Kergan and Andrew Smith and Xingwei Qu and Mude Hui and Bohong Wu and Qiyang Min and Hongzhi Huang and Xun Zhou and Wei Ye and Jiaheng Liu and Jian Yang and Yunfeng Shi and Chenghua Lin and Enduo Zhao and Tianle Cai and Ge Zhang and Wenhao Huang and Yoshua Bengio and Jason Eshraghian},
      year={2025},
      eprint={2510.25741},
      archivePrefix={arXiv},
      primaryClass={cs.CL},
      url={https://arxiv.org/abs/2510.25741}, 
}

@misc{parcae,
      title={Parcae: Scaling Laws For Stable Looped Language Models}, 
      author={Hayden Prairie and Zachary Novack and Taylor Berg-Kirkpatrick and Daniel Y. Fu},
      year={2026},
      eprint={2604.12946},
      archivePrefix={arXiv},
      primaryClass={cs.LG},
      url={https://arxiv.org/abs/2604.12946}, 
}

@article{merrill2026little,
  title={A little depth goes a long way: {T}he expressive power of log-depth transformers},
  author={Merrill, Will and Sabharwal, Ashish},
  journal={Advances in Neural Information Processing Systems},
  volume={38},
  pages={95315--95339},
  year={2026}
}

@inproceedings{transformer,
  title     = {Attention is All You Need},
  author    = {Vaswani, Ashish and Shazeer, Noam and Parmar, Niki and
               Uszkoreit, Jakob and Jones, Llion and Gomez, Aidan N. and
               Kaiser, Lukasz and Polosukhin, Illia},
  booktitle = {Advances in Neural Information Processing Systems 30},
  pages     = {5998--6008},
  year      = {2017}
}

@misc{universal_transformer,
      title={Universal Transformers}, 
      author={Mostafa Dehghani and Stephan Gouws and Oriol Vinyals and Jakob Uszkoreit and Łukasz Kaiser},
      year={2019},
      eprint={1807.03819},
      archivePrefix={arXiv},
      primaryClass={cs.CL},
      url={https://arxiv.org/abs/1807.03819}, 
}

@misc{coconut,
      title={Training Large Language Models to Reason in a Continuous Latent Space}, 
      author={Shibo Hao and Sainbayar Sukhbaatar and DiJia Su and Xian Li and Zhiting Hu and Jason Weston and Yuandong Tian},
      year={2025},
      eprint={2412.06769},
      archivePrefix={arXiv},
      primaryClass={cs.CL},
      url={https://arxiv.org/abs/2412.06769}, 
}

@misc{inan2017tying,
      title={Tying Word Vectors and Word Classifiers: A Loss Framework for Language Modeling}, 
      author={Hakan Inan and Khashayar Khosravi and Richard Socher},
      year={2017},
      eprint={1611.01462},
      archivePrefix={arXiv},
      primaryClass={cs.LG},
      url={https://arxiv.org/abs/1611.01462}, 
}

@article{anderson,
  title   = {Iterative Procedures for Nonlinear Integral Equations},
  author  = {Anderson, Donald G.},
  journal = {Journal of the ACM},
  volume  = {12},
  number  = {4},
  pages   = {547--560},
  year    = {1965},
  doi     = {10.1145/321296.321305}
}

@misc{mech,
      title={A Mechanistic Analysis of Looped Reasoning Language Models}, 
      author={Hugh Blayney and Álvaro Arroyo and Johan Obando-Ceron and Pablo Samuel Castro and Aaron Courville and Michael M. Bronstein and Xiaowen Dong},
      year={2026},
      eprint={2604.11791},
      archivePrefix={arXiv},
      primaryClass={cs.LG},
      url={https://arxiv.org/abs/2604.11791}, 
}

@misc{jacreg,
      title={Stabilizing Equilibrium Models by Jacobian Regularization}, 
      author={Shaojie Bai and Vladlen Koltun and J. Zico Kolter},
      year={2021},
      eprint={2106.14342},
      archivePrefix={arXiv},
      primaryClass={cs.LG},
      url={https://arxiv.org/abs/2106.14342}, 
}

@inproceedings{mdeq,
  author    = {Bai, Shaojie and Koltun, Vladlen and Kolter, J. Zico},
  title     = {Multiscale Deep Equilibrium Models},
  booktitle = {Advances in Neural Information Processing Systems},
  volume    = {33},
  year      = {2020},
  publisher = {Curran Associates, Inc.},
  url       = {https://proceedings.neurips.cc/paper/2020/hash/3812f9a59b634c2a9c574610eaba5bed-Abstract.html}
}

@book{krantz,
  author    = {Krantz, Steven G. and Parks, Harold R.},
  title     = {The Implicit Function Theorem: History, Theory, and Applications},
  publisher = {Birkh{\"a}user},
  address   = {Boston, MA},
  year      = {2002},
  doi       = {10.1007/978-1-4612-0059-8},
  isbn      = {978-0-8176-4285-3}
}

@misc{hrm,
      title={Hierarchical Reasoning Model}, 
      author={Guan Wang and Jin Li and Yuhao Sun and Xing Chen and Changling Liu and Yue Wu and Meng Lu and Sen Song and Yasin Abbasi Yadkori},
      year={2025},
      eprint={2506.21734},
      archivePrefix={arXiv},
      primaryClass={cs.AI},
      url={https://arxiv.org/abs/2506.21734}, 
}

@misc{trm,
      title={Less is More: Recursive Reasoning with Tiny Networks}, 
      author={Alexia Jolicoeur-Martineau},
      year={2025},
      eprint={2510.04871},
      archivePrefix={arXiv},
      primaryClass={cs.LG},
      url={https://arxiv.org/abs/2510.04871}, 
}

@misc{act,
      title={Adaptive Computation Time for Recurrent Neural Networks}, 
      author={Alex Graves},
      year={2017},
      eprint={1603.08983},
      archivePrefix={arXiv},
      primaryClass={cs.NE},
      url={https://arxiv.org/abs/1603.08983}, 
}

@inproceedings{giannou2023looped,
  title={Looped transformers as programmable computers},
  author={Giannou, Angeliki and Rajput, Shashank and Sohn, Jy-yong and Lee, Kangwook and Lee, Jason D and Papailiopoulos, Dimitris},
  booktitle={International Conference on Machine Learning},
  pages={11398--11442},
  year={2023},
  organization={PMLR}
}

@article{ozeren2025reinforcement,
  title={Reinforcement Learning for Latent-Space Thinking in {LLM}s},
  author={{\"O}zeren, Enes and A{\ss}enmacher, Matthias},
  journal={arXiv preprint arXiv:2512.11816},
  year={2025}
}

@misc{geiping,
      title={Scaling up Test-Time Compute with Latent Reasoning: A Recurrent Depth Approach}, 
      author={Jonas Geiping and Sean McLeish and Neel Jain and John Kirchenbauer and Siddharth Singh and Brian R. Bartoldson and Bhavya Kailkhura and Abhinav Bhatele and Tom Goldstein},
      year={2025},
      eprint={2502.05171},
      archivePrefix={arXiv},
      primaryClass={cs.LG},
      url={https://arxiv.org/abs/2502.05171}, 
}

@article{rizvi2026illusion,
  title={The Illusion of Superposition? {A} Principled Analysis of Latent Thinking in Language Models},
  author={Rizvi-Martel, Michael and Rabusseau, Guillaume and Mosbach, Marius},
  journal={arXiv preprint arXiv:2604.06374},
  year={2026}
}

@article{deng2025latent,
  title={Latent reasoning in {LLMs} as a vocabulary-space superposition},
  author={Deng, Jingcheng and Pang, Liang and Wei, Zihao and Xu, Shichen and Duan, Zenghao and Xu, Kun and Song, Yang and Shen, Huawei and Cheng, Xueqi},
  journal={arXiv preprint arXiv:2510.15522},
  year={2025}
}

@article{deng2026latent,
  title={{Latent-GRPO: G}roup Relative Policy Optimization for Latent Reasoning},
  author={Deng, Jingcheng and Wei, Zihao and Pang, Liang and Wu, Junhong and Xu, Shicheng and Duan, Zenghao and Shen, Huawei},
  journal={arXiv preprint arXiv:2604.27998},
  year={2026}
}

@misc{phantomgradient,
      title={On Training Implicit Models}, 
      author={Zhengyang Geng and Xin-Yu Zhang and Shaojie Bai and Yisen Wang and Zhouchen Lin},
      year={2022},
      eprint={2111.05177},
      archivePrefix={arXiv},
      primaryClass={cs.LG},
      url={https://arxiv.org/abs/2111.05177}, 
}

@article{zhou2025coevolutionary,
  title={Coevolutionary continuous discrete diffusion: Make your diffusion language model a latent reasoner},
  author={Zhou, Cai and Yang, Chenxiao and Hu, Yi and Wang, Chenyu and Zhang, Chubin and Zhang, Muhan and Mackey, Lester and Jaakkola, Tommi and Bates, Stephen and Zhang, Dinghuai},
  journal={arXiv preprint arXiv:2510.03206},
  year={2025}
}

@inproceedings{mcleish2025teachingpretrainedlanguagemodels,
  title={Teaching Pretrained Language Models to Think Deeper with Retrofitted Recurrence},
  author={McLeish, Sean Michael and Li, Ang and Kirchenbauer, John and Kalra, Dayal Singh and Bartoldson, Brian R and Kailkhura, Bhavya and Schwarzschild, Avi and Geiping, Jonas and Goldblum, Micah and Goldstein, Tom},
  booktitle={NeurIPS 2025 Workshop on Efficient Reasoning}
}

@misc{jfb,
      title={JFB: Jacobian-Free Backpropagation for Implicit Networks}, 
      author={Samy Wu Fung and Howard Heaton and Qiuwei Li and Daniel McKenzie and Stanley Osher and Wotao Yin},
      year={2021},
      eprint={2103.12803},
      archivePrefix={arXiv},
      primaryClass={cs.LG},
      url={https://arxiv.org/abs/2103.12803}, 
}

@article{achiam2023gpt,
  title={{GPT}-4 technical report},
  author={OpenAI},
  journal={arXiv preprint arXiv:2303.08774},
  year={2023}
}

@misc{anthropic2024claude3,
  title        = {The Claude 3 Model Family: {Opus, Sonnet, H}aiku},
  author       = {{Anthropic}},
  year         = {2024},
  howpublished = {Model card},
  url          = {https://www-cdn.anthropic.com/de8ba9b01c9ab7cbabf5c33b80b7bbc618857627/Model_Card_Claude_3.pdf}
}

@inproceedings{grattafiori2024llama,
  title={The {L}lama 3 herd of models},
  author={Grattafiori, Aaron and Dubey, Abhimanyu and Jauhri, Abhinav and Pandey, Abhinav and Kadian, Abhishek and Al-Dahle, Ahmad and Letman, Aiesha and Mathur, Akhil and Schelten, Alan and Vaughan, Alex and others},
  booktitle={Neural Information Processing Systems},
  year={2024}
}

@article{team2023gemini,
  title={Gemini: {A} family of highly capable multimodal models},
  author={{Gemini Team}},
  journal={arXiv preprint arXiv:2312.11805},
  year={2023}
}

@misc{nanochat,
  author = {Andrej Karpathy},
  title = {nanochat: The best ChatGPT that \$100 can buy},
  year = {2025},
  publisher = {GitHub},
  url = {https://github.com/karpathy/nanochat}
}

@misc{fineweb,
      title={The FineWeb Datasets: Decanting the Web for the Finest Text Data at Scale}, 
      author={Guilherme Penedo and Hynek Kydlíček and Loubna Ben allal and Anton Lozhkov and Margaret Mitchell and Colin Raffel and Leandro Von Werra and Thomas Wolf},
      year={2024},
      eprint={2406.17557},
      archivePrefix={arXiv},
      primaryClass={cs.CL},
      url={https://arxiv.org/abs/2406.17557}, 
}

@inproceedings{lambada,
  title = "The {LAMBADA} dataset: Word prediction requiring a broad discourse context",
  author = "Paperno, Denis and
    Kruszewski, Germ{\'a}n and
    Lazaridou, Angeliki and
    Pham, Ngoc Quan and
    Bernardi, Raffaella and
    Pezzelle, Sandro and
    Baroni, Marco and
    Boleda, Gemma and
    Fern{\'a}ndez, Raquel",
  booktitle = "Proceedings of the 54th Annual Meeting of the Association for Computational Linguistics (Volume 1: Long Papers)",
  month = aug,
  year = "2016",
  address = "Berlin, Germany",
  publisher = "Association for Computational Linguistics",
  pages = "1525--1534",
  doi = "10.18653/v1/P16-1144",
  url = "https://aclanthology.org/P16-1144/"
}

@article{r1,
   title={DeepSeek-R1 incentivizes reasoning in LLMs through reinforcement learning},
   volume={645},
   ISSN={1476-4687},
   url={http://dx.doi.org/10.1038/s41586-025-09422-z},
   DOI={10.1038/s41586-025-09422-z},
   number={8081},
   journal={Nature},
   publisher={Springer Science and Business Media LLC},
   author={Guo, Daya and Yang, Dejian and Zhang, Haowei and Song, Junxiao and Wang, Peiyi and Zhu, Qihao and Xu, Runxin and Zhang, Ruoyu and Ma, Shirong and Bi, Xiao and Zhang, Xiaokang and Yu, Xingkai and Wu, Yu and Wu, Z. F. and Gou, Zhibin and Shao, Zhihong and Li, Zhuoshu and Gao, Ziyi and Liu, Aixin and Xue, Bing and Wang, Bingxuan and Wu, Bochao and Feng, Bei and Lu, Chengda and Zhao, Chenggang and Deng, Chengqi and Ruan, Chong and Dai, Damai and Chen, Deli and Ji, Dongjie and Li, Erhang and Lin, Fangyun and Dai, Fucong and Luo, Fuli and Hao, Guangbo and Chen, Guanting and Li, Guowei and Zhang, H. and Xu, Hanwei and Ding, Honghui and Gao, Huazuo and Qu, Hui and Li, Hui and Guo, Jianzhong and Li, Jiashi and Chen, Jingchang and Yuan, Jingyang and Tu, Jinhao and Qiu, Junjie and Li, Junlong and Cai, J. L. and Ni, Jiaqi and Liang, Jian and Chen, Jin and Dong, Kai and Hu, Kai and You, Kaichao and Gao, Kaige and Guan, Kang and Huang, Kexin and Yu, Kuai and Wang, Lean and Zhang, Lecong and Zhao, Liang and Wang, Litong and Zhang, Liyue and Xu, Lei and Xia, Leyi and Zhang, Mingchuan and Zhang, Minghua and Tang, Minghui and Zhou, Mingxu and Li, Meng and Wang, Miaojun and Li, Mingming and Tian, Ning and Huang, Panpan and Zhang, Peng and Wang, Qiancheng and Chen, Qinyu and Du, Qiushi and Ge, Ruiqi and Zhang, Ruisong and Pan, Ruizhe and Wang, Runji and Chen, R. J. and Jin, R. L. and Chen, Ruyi and Lu, Shanghao and Zhou, Shangyan and Chen, Shanhuang and Ye, Shengfeng and Wang, Shiyu and Yu, Shuiping and Zhou, Shunfeng and Pan, Shuting and Li, S. S. and Zhou, Shuang and Wu, Shaoqing and Yun, Tao and Pei, Tian and Sun, Tianyu and Wang, T. and Zeng, Wangding and Liu, Wen and Liang, Wenfeng and Gao, Wenjun and Yu, Wenqin and Zhang, Wentao and Xiao, W. L. and An, Wei and Liu, Xiaodong and Wang, Xiaohan and Chen, Xiaokang and Nie, Xiaotao and Cheng, Xin and Liu, Xin and Xie, Xin and Liu, Xingchao and Yang, Xinyu and Li, Xinyuan and Su, Xuecheng and Lin, Xuheng and Li, X. Q. and Jin, Xiangyue and Shen, Xiaojin and Chen, Xiaosha and Sun, Xiaowen and Wang, Xiaoxiang and Song, Xinnan and Zhou, Xinyi and Wang, Xianzu and Shan, Xinxia and Li, Y. K. and Wang, Y. Q. and Wei, Y. X. and Zhang, Yang and Xu, Yanhong and Li, Yao and Zhao, Yao and Sun, Yaofeng and Wang, Yaohui and Yu, Yi and Zhang, Yichao and Shi, Yifan and Xiong, Yiliang and He, Ying and Piao, Yishi and Wang, Yisong and Tan, Yixuan and Ma, Yiyang and Liu, Yiyuan and Guo, Yongqiang and Ou, Yuan and Wang, Yuduan and Gong, Yue and Zou, Yuheng and He, Yujia and Xiong, Yunfan and Luo, Yuxiang and You, Yuxiang and Liu, Yuxuan and Zhou, Yuyang and Zhu, Y. X. and Huang, Yanping and Li, Yaohui and Zheng, Yi and Zhu, Yuchen and Ma, Yunxian and Tang, Ying and Zha, Yukun and Yan, Yuting and Ren, Z. Z. and Ren, Zehui and Sha, Zhangli and Fu, Zhe and Xu, Zhean and Xie, Zhenda and Zhang, Zhengyan and Hao, Zhewen and Ma, Zhicheng and Yan, Zhigang and Wu, Zhiyu and Gu, Zihui and Zhu, Zijia and Liu, Zijun and Li, Zilin and Xie, Ziwei and Song, Ziyang and Pan, Zizheng and Huang, Zhen and Xu, Zhipeng and Zhang, Zhongyu and Zhang, Zhen},
   year={2025},
   month=Sept, pages={633–638} }

@misc{loop1,
      title={Universal Transformers Need Memory: Depth-State Trade-offs in Adaptive Recursive Reasoning}, 
      author={Grigory Sapunov},
      year={2026},
      eprint={2604.21999},
      archivePrefix={arXiv},
      primaryClass={cs.LG},
      url={https://arxiv.org/abs/2604.21999}, 
}

@misc{loop2,
      title={Looped Transformers are Better at Learning Learning Algorithms}, 
      author={Liu Yang and Kangwook Lee and Robert Nowak and Dimitris Papailiopoulos},
      year={2024},
      eprint={2311.12424},
      archivePrefix={arXiv},
      primaryClass={cs.LG},
      url={https://arxiv.org/abs/2311.12424}, 
}

@misc{loop3,
      title={Stability and Generalization in Looped Transformers}, 
      author={Asher Labovich},
      year={2026},
      eprint={2604.15259},
      archivePrefix={arXiv},
      primaryClass={cs.LG},
      url={https://arxiv.org/abs/2604.15259}, 
}

@misc{loop4,
      title={Loop, Think, \& Generalize: Implicit Reasoning in Recurrent-Depth Transformers}, 
      author={Harsh Kohli and Srinivasan Parthasarathy and Huan Sun and Yuekun Yao},
      year={2026},
      eprint={2604.07822},
      archivePrefix={arXiv},
      primaryClass={cs.CL},
      url={https://arxiv.org/abs/2604.07822}, 
}

@misc{loop5,
      title={Reasoning with Latent Thoughts: On the Power of Looped Transformers}, 
      author={Nikunj Saunshi and Nishanth Dikkala and Zhiyuan Li and Sanjiv Kumar and Sashank J. Reddi},
      year={2025},
      eprint={2502.17416},
      archivePrefix={arXiv},
      primaryClass={cs.CL},
      url={https://arxiv.org/abs/2502.17416}, 
}

@misc{loop6,
      title={On Expressive Power of Looped Transformers: Theoretical Analysis and Enhancement via Timestep Encoding}, 
      author={Kevin Xu and Issei Sato},
      year={2025},
      eprint={2410.01405},
      archivePrefix={arXiv},
      primaryClass={cs.LG},
      url={https://arxiv.org/abs/2410.01405}, 
}

@inproceedings{
equilibrium,
title={Equilibrium Language Models},
author={Yikun Jiang and Huanyu Wang and Tianhong Ding and Wenhu Zhang and Yiming Wu and Hanbin Zhao and John C.S. Lui},
booktitle={The Fourteenth International Conference on Learning Representations},
year={2026},
url={https://openreview.net/forum?id=lqJT6xmuH3}
}

@misc{cot,
      title={Chain-of-Thought Prompting Elicits Reasoning in Large Language Models}, 
      author={Jason Wei and Xuezhi Wang and Dale Schuurmans and Maarten Bosma and Brian Ichter and Fei Xia and Ed Chi and Quoc Le and Denny Zhou},
      year={2023},
      eprint={2201.11903},
      archivePrefix={arXiv},
      primaryClass={cs.CL},
      url={https://arxiv.org/abs/2201.11903}, 
}

@misc{loopedtransformer,
      title={Looped Transformers for Length Generalization}, 
      author={Ying Fan and Yilun Du and Kannan Ramchandran and Kangwook Lee},
      year={2025},
      eprint={2409.15647},
      archivePrefix={arXiv},
      primaryClass={cs.LG},
      url={https://arxiv.org/abs/2409.15647}, 
}

@misc{wei2025sim,
      title={SIM-CoT: Supervised Implicit Chain-of-Thought}, 
      author={Xilin Wei and Xiaoran Liu and Yuhang Zang and Xiaoyi Dong and Yuhang Cao and Jiaqi Wang and Xipeng Qiu and Dahua Lin},
      year={2025},
      eprint={2509.20317},
      archivePrefix={arXiv},
      primaryClass={cs.CL},
      url={https://arxiv.org/abs/2509.20317}, 
}

@inproceedings{mohtashami2023cotformer,
  title={Co{T}{F}ormer: {M}ore tokens with attention make up for less depth},
  author={Mohtashami, Amirkeivan and Pagliardini, Matteo and Jaggi, Martin},
  booktitle={Workshop on Advancing Neural Network Training: Computational Efficiency, Scalability, and Resource Optimization (WANT@ NeurIPS 2023)},
  year={2023}
}

@misc{loopedbetter,
      title={Looped Transformers are Better at Learning Learning Algorithms}, 
      author={Liu Yang and Kangwook Lee and Robert Nowak and Dimitris Papailiopoulos},
      year={2024},
      eprint={2311.12424},
      archivePrefix={arXiv},
      primaryClass={cs.LG},
      url={https://arxiv.org/abs/2311.12424}, 
}

@misc{looped1,
      title={Reasoning with Latent Thoughts: On the Power of Looped Transformers}, 
      author={Nikunj Saunshi and Nishanth Dikkala and Zhiyuan Li and Sanjiv Kumar and Sashank J. Reddi},
      year={2025},
      eprint={2502.17416},
      archivePrefix={arXiv},
      primaryClass={cs.CL},
      url={https://arxiv.org/abs/2502.17416}, 
}

@misc{mor,
      title={Mixture-of-Recursions: Learning Dynamic Recursive Depths for Adaptive Token-Level Computation}, 
      author={Sangmin Bae and Yujin Kim and Reza Bayat and Sungnyun Kim and Jiyoun Ha and Tal Schuster and Adam Fisch and Hrayr Harutyunyan and Ziwei Ji and Aaron Courville and Se-Young Yun},
      year={2025},
      eprint={2507.10524},
      archivePrefix={arXiv},
      primaryClass={cs.CL},
      url={https://arxiv.org/abs/2507.10524}, 
}

@misc{nextlat,
      title={Next-Latent Prediction Transformers Learn Compact World Models}, 
      author={Jayden Teoh and Manan Tomar and Kwangjun Ahn and Edward S. Hu and Pratyusha Sharma and Riashat Islam and Alex Lamb and John Langford},
      year={2025},
      eprint={2511.05963},
      archivePrefix={arXiv},
      primaryClass={cs.LG},
      url={https://arxiv.org/abs/2511.05963}, 
}

@misc{chen2025innerthinkingtransformerleveraging,
      title={Inner Thinking Transformer: Leveraging Dynamic Depth Scaling to Foster Adaptive Internal Thinking}, 
      author={Yilong Chen and Junyuan Shang and Zhenyu Zhang and Yanxi Xie and Jiawei Sheng and Tingwen Liu and Shuohuan Wang and Yu Sun and Hua Wu and Haifeng Wang},
      year={2025},
      eprint={2502.13842},
      archivePrefix={arXiv},
      primaryClass={cs.CL},
      url={https://arxiv.org/abs/2502.13842}, 
}

@misc{xu2026formalcomparisonchainthought,
      title={A Formal Comparison Between Chain of Thought and Latent Thought}, 
      author={Kevin Xu and Issei Sato},
      year={2026},
      eprint={2509.25239},
      archivePrefix={arXiv},
      primaryClass={cs.AI},
      url={https://arxiv.org/abs/2509.25239}, 
}

@misc{jeddi2026loopformerelasticdepthloopedtransformers,
      title={LoopFormer: Elastic-Depth Looped Transformers for Latent Reasoning via Shortcut Modulation}, 
      author={Ahmadreza Jeddi and Marco Ciccone and Babak Taati},
      year={2026},
      eprint={2602.11451},
      archivePrefix={arXiv},
      primaryClass={cs.CL},
      url={https://arxiv.org/abs/2602.11451}, 
}

@misc{fu2026thinkathardselectivelatentiterations,
      title={Think-at-Hard: Selective Latent Iterations to Improve Reasoning Language Models}, 
      author={Tianyu Fu and Yichen You and Zekai Chen and Guohao Dai and Huazhong Yang and Yu Wang},
      year={2026},
      eprint={2511.08577},
      archivePrefix={arXiv},
      primaryClass={cs.CL},
      url={https://arxiv.org/abs/2511.08577}, 
}

@article{zhu2025emergence,
  title={Emergence of superposition: {U}nveiling the training dynamics of chain of continuous thought},
  author={Zhu, Hanlin and Hao, Shibo and Hu, Zhiting and Jiao, Jiantao and Russell, Stuart and Tian, Yuandong},
  journal={arXiv preprint arXiv:2509.23365},
  year={2025}
}

@article{zeitoun2026hyperloop,
  title={Hyperloop Transformers},
  author={Zeitoun, Abbas and Torroba-Hennigen, Lucas and Kim, Yoon},
  journal={arXiv preprint arXiv:2604.21254},
  year={2026}
}

@article{knupp2026depth,
  title={{Depth-Recurrent Attention Mixtures: G}iving Latent Reasoning the Attention it Deserves},
  author={Knupp, Jonas and Metzen, Jan Hendrik and Bohn, Jeremias and Groh, Georg and Kersting, Kristian},
  journal={arXiv preprint arXiv:2601.21582},
  year={2026}
}

@article{song2026adaponderlm,
  title={{AdaPonderLM: G}ated Pondering Language Models with Token-Wise Adaptive Depth},
  author={Song, Shixiang and Li, He and Wang, Zitong and Zeng, Boyi and Song, Feichen and Wang, Yixuan and Xu, Zhiqin John and He, Ziwei and Lin, Zhouhan},
  journal={arXiv preprint arXiv:2603.01914},
  year={2026}
}

@inproceedings{zhangmodr,
  title={{MoDr: M}ixture-of-Depth-Recurrent Transformers for Test-Time Reasoning},
  author={Zhang, Xiaojing and Wu, Haifeng and He, Gang and Shen, Jiyang and Lyu, Bochen and Zhu, Zhanxing},
  booktitle={The Fourteenth International Conference on Learning Representations}
}

@inproceedings{zhengchaingpt,
  title={{ChainGPT: D}ual-Reasoning Model with Recurrent Depth and Multi-Rank State Updates},
  author={Zheng, Yunao and Wang, Xiaojie and Ren, Lei and Wei, Chen},
  booktitle={The Fourteenth International Conference on Learning Representations}
}

@misc{geiping2025efficientparallelsamplersrecurrentdepth,
      title={Efficient Parallel Samplers for Recurrent-Depth Models and Their Connection to Diffusion Language Models}, 
      author={Jonas Geiping and Xinyu Yang and Guinan Su},
      year={2025},
      eprint={2510.14961},
      archivePrefix={arXiv},
      primaryClass={cs.LG},
      url={https://arxiv.org/abs/2510.14961}, 
}

@inproceedings{gatmiry2024can,
  title={Can Looped Transformers Learn to Implement Multi-step Gradient Descent for In-context Learning?},
  author={Gatmiry, Khashayar and Saunshi, Nikunj and Reddi, Sashank J and Jegelka, Stefanie and Kumar, Sanjiv},
  booktitle={International Conference on Machine Learning},
  pages={15130--15152},
  year={2024},
  organization={PMLR}
}

@article{zhu2026reasoning,
  title={Reasoning by superposition: {A} theoretical perspective on chain of continuous thought},
  author={Zhu, Hanlin and Hao, Shibo and Hu, Zhiting and Jiao, Jiantao and Russell, Stuart J and Tian, Yuandong},
  journal={Advances in Neural Information Processing Systems},
  volume={38},
  pages={79931--79963},
  year={2026}
}

@misc{hu2025beliefstatetransformer,
      title={The Belief State Transformer}, 
      author={Edward S. Hu and Kwangjun Ahn and Qinghua Liu and Haoran Xu and Manan Tomar and Ada Langford and Jayden Teoh and Bryon Xu and David Yan and Dinesh Jayaraman and Alex Lamb and John Langford},
      year={2025},
      eprint={2410.23506},
      archivePrefix={arXiv},
      primaryClass={cs.LG},
      url={https://arxiv.org/abs/2410.23506}, 
}

@misc{fleuret2025freetransformer,
      title={The Free Transformer}, 
      author={François Fleuret},
      year={2025},
      eprint={2510.17558},
      archivePrefix={arXiv},
      primaryClass={cs.LG},
      url={https://arxiv.org/abs/2510.17558}, 
}

@book{krantz2002implicit,
  title={The implicit function theorem: history, theory, and applications},
  author={Krantz, Steven George and Parks, Harold R},
  volume={202},
  number={11},
  year={2002},
  publisher={Springer}
}

@misc{kaiser2016neuralgpuslearnalgorithms,
      title={Neural GPUs Learn Algorithms}, 
      author={Łukasz Kaiser and Ilya Sutskever},
      year={2016},
      eprint={1511.08228},
      archivePrefix={arXiv},
      primaryClass={cs.LG},
      url={https://arxiv.org/abs/1511.08228}, 
}

@misc{lan2020albertlitebertselfsupervised,
      title={ALBERT: A Lite BERT for Self-supervised Learning of Language Representations}, 
      author={Zhenzhong Lan and Mingda Chen and Sebastian Goodman and Kevin Gimpel and Piyush Sharma and Radu Soricut},
      year={2020},
      eprint={1909.11942},
      archivePrefix={arXiv},
      primaryClass={cs.CL},
      url={https://arxiv.org/abs/1909.11942}, 
}

@misc{dabre2018recurrentstackinglayerscompact,
      title={Recurrent Stacking of Layers for Compact Neural Machine Translation Models}, 
      author={Raj Dabre and Atsushi Fujita},
      year={2018},
      eprint={1807.05353},
      archivePrefix={arXiv},
      primaryClass={cs.CL},
      url={https://arxiv.org/abs/1807.05353}, 
}

@misc{takase2023lessonsparametersharinglayers,
      title={Lessons on Parameter Sharing across Layers in Transformers}, 
      author={Sho Takase and Shun Kiyono},
      year={2023},
      eprint={2104.06022},
      archivePrefix={arXiv},
      primaryClass={cs.CL},
      url={https://arxiv.org/abs/2104.06022}, 
}

@misc{bae2025relaxedrecursivetransformerseffective,
      title={Relaxed Recursive Transformers: Effective Parameter Sharing with Layer-wise LoRA}, 
      author={Sangmin Bae and Adam Fisch and Hrayr Harutyunyan and Ziwei Ji and Seungyeon Kim and Tal Schuster},
      year={2025},
      eprint={2410.20672},
      archivePrefix={arXiv},
      primaryClass={cs.CL},
      url={https://arxiv.org/abs/2410.20672}, 
}

@inproceedings{
zheng2026chaingpt,
title={Chain{GPT}: Dual-Reasoning Model with Recurrent Depth and Multi-Rank State Updates},
author={Yunao Zheng and Xiaojie Wang and Lei Ren and Chen Wei},
booktitle={The Fourteenth International Conference on Learning Representations},
year={2026},
url={https://openreview.net/forum?id=kdZbxizwGK}
}

@inproceedings{
zhang2026modr,
title={MoDr: Mixture-of-Depth-Recurrent Transformers for Test-Time Reasoning},
author={Xiaojing Zhang and Haifeng Wu and Gang He and Jiyang Shen and Bochen Lyu and Zhanxing Zhu},
booktitle={The Fourteenth International Conference on Learning Representations},
year={2026},
url={https://openreview.net/forum?id=9Pba4rcQbE}
}

@misc{song2026adaponderlmgatedponderinglanguage,
      title={AdaPonderLM: Gated Pondering Language Models with Token-Wise Adaptive Depth}, 
      author={Shixiang Song and He Li and Zitong Wang and Boyi Zeng and Feichen Song and Yixuan Wang and Zhiqin John Xu and Ziwei He and Zhouhan Lin},
      year={2026},
      eprint={2603.01914},
      archivePrefix={arXiv},
      primaryClass={cs.CL},
      url={https://arxiv.org/abs/2603.01914}, 
}

@misc{moosa2026understandingdynamiccomputeallocation,
      title={Understanding Dynamic Compute Allocation in Recurrent Transformers}, 
      author={Ibraheem Muhammad Moosa and Suhas Lohit and Ye Wang and Moitreya Chatterjee and Wenpeng Yin},
      year={2026},
      eprint={2602.08864},
      archivePrefix={arXiv},
      primaryClass={cs.CL},
      url={https://arxiv.org/abs/2602.08864}, 
}

@misc{knupp2026depthrecurrentattentionmixturesgiving,
      title={Depth-Recurrent Attention Mixtures: Giving Latent Reasoning the Attention it Deserves}, 
      author={Jonas Knupp and Jan Hendrik Metzen and Jeremias Bohn and Georg Groh and Kristian Kersting},
      year={2026},
      eprint={2601.21582},
      archivePrefix={arXiv},
      primaryClass={cs.AI},
      url={https://arxiv.org/abs/2601.21582}, 
}

@misc{wu2025parallellooptransformerefficient,
      title={Parallel Loop Transformer for Efficient Test-Time Computation Scaling}, 
      author={Bohong Wu and Mengzhao Chen and Xiang Luo and Shen Yan and Qifan Yu and Fan Xia and Tianqi Zhang and Hongrui Zhan and Zheng Zhong and Xun Zhou and Siyuan Qiao and Xingyan Bin},
      year={2025},
      eprint={2510.24824},
      archivePrefix={arXiv},
      primaryClass={cs.CL},
      url={https://arxiv.org/abs/2510.24824}, 
}

@misc{zeitoun2026hyperlooptransformers,
      title={Hyperloop Transformers}, 
      author={Abbas Zeitoun and Lucas Torroba-Hennigen and Yoon Kim},
      year={2026},
      eprint={2604.21254},
      archivePrefix={arXiv},
      primaryClass={cs.LG},
      url={https://arxiv.org/abs/2604.21254}, 
}

@misc{giannou2023loopedtransformersprogrammablecomputers,
      title={Looped Transformers as Programmable Computers}, 
      author={Angeliki Giannou and Shashank Rajput and Jy-yong Sohn and Kangwook Lee and Jason D. Lee and Dimitris Papailiopoulos},
      year={2023},
      eprint={2301.13196},
      archivePrefix={arXiv},
      primaryClass={cs.LG},
      url={https://arxiv.org/abs/2301.13196}, 
}

@misc{yang2024loopedtransformersbetterlearning,
      title={Looped Transformers are Better at Learning Learning Algorithms}, 
      author={Liu Yang and Kangwook Lee and Robert Nowak and Dimitris Papailiopoulos},
      year={2024},
      eprint={2311.12424},
      archivePrefix={arXiv},
      primaryClass={cs.LG},
      url={https://arxiv.org/abs/2311.12424}, 
}

@misc{gatmiry2024roledepthloopingincontext,
      title={On the Role of Depth and Looping for In-Context Learning with Task Diversity}, 
      author={Khashayar Gatmiry and Nikunj Saunshi and Sashank J. Reddi and Stefanie Jegelka and Sanjiv Kumar},
      year={2024},
      eprint={2410.21698},
      archivePrefix={arXiv},
      primaryClass={cs.LG},
      url={https://arxiv.org/abs/2410.21698}, 
}

@misc{huang2025transformerslearnimplementmultistep,
      title={Transformers Learn to Implement Multi-step Gradient Descent with Chain of Thought}, 
      author={Jianhao Huang and Zixuan Wang and Jason D. Lee},
      year={2025},
      eprint={2502.21212},
      archivePrefix={arXiv},
      primaryClass={cs.LG},
      url={https://arxiv.org/abs/2502.21212}, 
}

@misc{xu2025expressivepowerloopedtransformers,
      title={On Expressive Power of Looped Transformers: Theoretical Analysis and Enhancement via Timestep Encoding}, 
      author={Kevin Xu and Issei Sato},
      year={2025},
      eprint={2410.01405},
      archivePrefix={arXiv},
      primaryClass={cs.LG},
      url={https://arxiv.org/abs/2410.01405}, 
}

@misc{merrill2025littledepthgoeslong,
      title={A Little Depth Goes a Long Way: The Expressive Power of Log-Depth Transformers}, 
      author={William Merrill and Ashish Sabharwal},
      year={2025},
      eprint={2503.03961},
      archivePrefix={arXiv},
      primaryClass={cs.LG},
      url={https://arxiv.org/abs/2503.03961}, 
}

@misc{sapunov2026universaltransformersneedmemory,
      title={Universal Transformers Need Memory: Depth-State Trade-offs in Adaptive Recursive Reasoning}, 
      author={Grigory Sapunov},
      year={2026},
      eprint={2604.21999},
      archivePrefix={arXiv},
      primaryClass={cs.LG},
      url={https://arxiv.org/abs/2604.21999}, 
}

@misc{labovich2026stabilitygeneralizationloopedtransformers,
      title={Stability and Generalization in Looped Transformers}, 
      author={Asher Labovich},
      year={2026},
      eprint={2604.15259},
      archivePrefix={arXiv},
      primaryClass={cs.LG},
      url={https://arxiv.org/abs/2604.15259}, 
}

@misc{pappone2025twoscalelatentdynamicsrecurrentdepth,
      title={Two-Scale Latent Dynamics for Recurrent-Depth Transformers}, 
      author={Francesco Pappone and Donato Crisostomi and Emanuele Rodolà},
      year={2025},
      eprint={2509.23314},
      archivePrefix={arXiv},
      primaryClass={cs.LG},
      url={https://arxiv.org/abs/2509.23314}, 
}

@misc{han2026hierarchicalvsflatiteration,
      title={Hierarchical vs. Flat Iteration in Shared-Weight Transformers}, 
      author={Sang-Il Han},
      year={2026},
      eprint={2604.14442},
      archivePrefix={arXiv},
      primaryClass={cs.CL},
      url={https://arxiv.org/abs/2604.14442}, 
}

@misc{goyal2026eltelasticloopedtransformers,
      title={ELT: Elastic Looped Transformers for Visual Generation}, 
      author={Sahil Goyal and Swayam Agrawal and Gautham Govind Anil and Prateek Jain and Sujoy Paul and Aditya Kusupati},
      year={2026},
      eprint={2604.09168},
      archivePrefix={arXiv},
      primaryClass={cs.CV},
      url={https://arxiv.org/abs/2604.09168}, 
}

@misc{tang2026looprptreinforcementpretraininglooped,
      title={LoopRPT: Reinforcement Pre-Training for Looped Language Models}, 
      author={Guo Tang and Shixin Jiang and Heng Chang and Nuo Chen and Yuhan Li and Huiming Fan and Jia Li and Ming Liu and Bing Qin},
      year={2026},
      eprint={2603.19714},
      archivePrefix={arXiv},
      primaryClass={cs.CL},
      url={https://arxiv.org/abs/2603.19714}, 
}

@misc{rauba2026tinyautoregressiverecursivemodels,
      title={Tiny Autoregressive Recursive Models}, 
      author={Paulius Rauba and Claudio Fanconi and Mihaela van der Schaar},
      year={2026},
      eprint={2603.08082},
      archivePrefix={arXiv},
      primaryClass={cs.LG},
      url={https://arxiv.org/abs/2603.08082}, 
}

@misc{zhu2025reasoningsuperpositiontheoreticalperspective,
      title={Reasoning by Superposition: A Theoretical Perspective on Chain of Continuous Thought}, 
      author={Hanlin Zhu and Shibo Hao and Zhiting Hu and Jiantao Jiao and Stuart Russell and Yuandong Tian},
      year={2025},
      eprint={2505.12514},
      archivePrefix={arXiv},
      primaryClass={cs.LG},
      url={https://arxiv.org/abs/2505.12514}, 
}

@misc{zeng2026ponderlmpretraininglanguagemodels,
      title={PonderLM: Pretraining Language Models to Ponder in Continuous Space}, 
      author={Boyi Zeng and Shixiang Song and Siyuan Huang and Yixuan Wang and He Li and Ziwei He and Xinbing Wang and Zhiyu Li and Zhouhan Lin},
      year={2026},
      eprint={2505.20674},
      archivePrefix={arXiv},
      primaryClass={cs.CL},
      url={https://arxiv.org/abs/2505.20674}, 
}

@misc{ghugare2026roleiterativecomputationreinforcement,
      title={On the Role of Iterative Computation in Reinforcement Learning}, 
      author={Raj Ghugare and Michał Bortkiewicz and Alicja Ziarko and Benjamin Eysenbach},
      year={2026},
      eprint={2602.05999},
      archivePrefix={arXiv},
      primaryClass={cs.LG},
      url={https://arxiv.org/abs/2602.05999}, 
}

@misc{tur2026recurrentdepthvlaimplicittesttime,
      title={Recurrent-Depth VLA: Implicit Test-Time Compute Scaling of Vision-Language-Action Models via Latent Iterative Reasoning}, 
      author={Yalcin Tur and Jalal Naghiyev and Haoquan Fang and Wei-Chuan Tsai and Jiafei Duan and Dieter Fox and Ranjay Krishna},
      year={2026},
      eprint={2602.07845},
      archivePrefix={arXiv},
      primaryClass={cs.RO},
      url={https://arxiv.org/abs/2602.07845}, 
}

@misc{komisarczyk2026recursiveinferencemachinesneural,
      title={Recursive Inference Machines for Neural Reasoning}, 
      author={Mieszko Komisarczyk and Saurabh Mathur and Maurice Kraus and Sriraam Natarajan and Kristian Kersting},
      year={2026},
      eprint={2603.05234},
      archivePrefix={arXiv},
      primaryClass={cs.LG},
      url={https://arxiv.org/abs/2603.05234}, 
}

@misc{cameron2026stepforwardksteps,
      title={One Step Forward and K Steps Back: Better Reasoning with Denoising Recursion Models}, 
      author={Chris Cameron and Wangzheng Wang and Nikita Ivanov and Ashmita Bhattacharyya and Didier Chételat and Yingxue Zhang},
      year={2026},
      eprint={2604.18839},
      archivePrefix={arXiv},
      primaryClass={cs.LG},
      url={https://arxiv.org/abs/2604.18839}, 
}

@misc{williams2026prioritizeprocessjustoutcome,
      title={Prioritize the Process, Not Just the Outcome: Rewarding Latent Thought Trajectories Improves Reasoning in Looped Language Models}, 
      author={Jonathan Williams and Esin Tureci},
      year={2026},
      eprint={2602.10520},
      archivePrefix={arXiv},
      primaryClass={cs.LG},
      url={https://arxiv.org/abs/2602.10520}, 
}
